\newcommand{\includesvg}[2][scale=1]{\includegraphics[#1]{#2.pdf}}
\algrenewcommand\algorithmicdo{}
\newtheorem{theorem}{Theorem}
\newtheorem{proposition}{Proposition}
\newtheorem{lemma}{Lemma}
\theoremstyle{definition}
\newtheorem{example}{Example}
\newtheoremstyle{assume}
  {3pt}% measure of space to leave above the theorem. e.g.: 3pt
  {3pt}% measure of space to leave below the theorem. e.g.: 3pt
  {}% name of font to use in the body of the theorem
  {}% measure of space to indent
  {\bf}% name of head font
  {}% punctuation between head and body
  { }% space after theorem head; " " = normal interword space
  {\thmname{#1}.\thmnumber{#2}\thmnote{ \textnormal{(\textit{#3})}}}% Manually specify head
\theoremstyle{assume}
\DeclareMathOperator{\E}{\mathbb{E}}
\DeclareMathOperator{\sign}{sign}
\let\Re\relax
\DeclareMathOperator{\Re}{\mathbb{R}e}
\let\Im\relax
\DeclareMathOperator{\Im}{\mathbb{I}m}
\DeclareMathOperator*{\argmin}{argmin}
\DeclareMathOperator*{\argmax}{argmax}
\DeclareMathOperator*{\minimize}{minimize}
\DeclareMathOperator*{\maximize}{maximize}
\DeclareMathOperator{\subjectto}{subject\ to}
\DeclareMathOperator{\supp}{supp}
\DeclareMathOperator{\indicator}{\mathbb{I}}
\newcommand{\vect}[2]{\ensuremath{[\begin{array}{#1} #2 \end{array}]}}
\newcommand{\norm}[1]{\ensuremath{\left\| #1 \right\|}}
\newcommand{\abs}[1]{\ensuremath{{\left\vert #1 \right\vert}}}
\newcommand{\calA}{\ensuremath{\mathcal{A}}}
\newcommand{\calB}{\ensuremath{\mathcal{B}}}
\newcommand{\calC}{\ensuremath{\mathcal{C}}}
\newcommand{\calD}{\ensuremath{\mathcal{D}}}
\newcommand{\calE}{\ensuremath{\mathcal{E}}}
\newcommand{\calL}{\ensuremath{\mathcal{L}}}
\newcommand{\calO}{\ensuremath{\mathcal{O}}}
\newcommand{\calP}{\ensuremath{\mathcal{P}}}
\newcommand{\calS}{\ensuremath{\mathcal{S}}}
\newcommand{\calT}{\ensuremath{\mathcal{T}}}
\newcommand{\calX}{\ensuremath{\mathcal{X}}}
\newcommand{\calZ}{\ensuremath{\mathcal{Z}}}
\newcommand{\bzero}{\ensuremath{\bm{0}}}
\newcommand{\bF}{\ensuremath{\bm{F}}}
\newcommand{\bX}{\ensuremath{\bm{X}}}
\newcommand{\bh}{\ensuremath{\bm{h}}}
\newcommand{\bk}{\ensuremath{\bm{k}}}
\newcommand{\bp}{\ensuremath{\bm{p}}}
\newcommand{\bu}{\ensuremath{\bm{u}}}
\newcommand{\bw}{\ensuremath{\bm{w}}}
\newcommand{\bx}{\ensuremath{\bm{x}}}
\newcommand{\by}{\ensuremath{\bm{y}}}
\newcommand{\bz}{\ensuremath{\bm{z}}}
\newcommand{\bbeta}{\ensuremath{\bm{\beta}}}
\newcommand{\bdelta}{\ensuremath{\bm{\delta}}}
\newcommand{\bmu}{\ensuremath{\bm{\mu}}}
\newcommand{\bnu}{\ensuremath{\bm{\nu}}}
\newcommand{\bphi}{\ensuremath{\bm{\phi}}}
\newcommand{\fkh}{\ensuremath{\mathfrak{h}}}
\newcommand{\fkm}{\ensuremath{\mathfrak{m}}}
\newcommand{\fkp}{\ensuremath{\mathfrak{p}}}
\newcommand{\fkv}{\ensuremath{\mathfrak{v}}}
\newcommand{\setR}{\ensuremath{\mathbb{R}}}
\newcommand{\setC}{\ensuremath{\mathbb{C}}}
\def\st/{\textsuperscript{st}}
\def\nd/{\textsuperscript{nd}}
\def\rd/{\textsuperscript{rd}}
\def\th/{\textsuperscript{th}}
\newcommand{\ones}{\ensuremath{\mathds{1}}}
\newcommand{\MSE}{\textup{MSE}}
\newcommand{\bph}{\ensuremath{\bm{{\hat{p}}}}}
\newcommand{\byh}{\ensuremath{\bm{{\hat{y}}}}}
\newcommand{\bxt}{\ensuremath{\bm{{\tilde{x}}}}}
\def\nnil{\nil}
\newcounter{prob}
\newenvironment{prob}[1][\nil]{%
	\def\tmp{#1}
	\equation
	\ifx\tmp\nnil
		\refstepcounter{prob}
		\tag{P\Roman{prob}}
	\else
		\tag{\tmp}
	\fi
	\aligned%
}{%
	\endaligned\endequation%
}
\title{Functional Nonlinear Sparse Models}
\author{Luiz~F.~O.~Chamon, Yonina C. Eldar, and Alejandro~Ribeiro%
\thanks{L.F.O. Chamon and A. Ribeiro are with the Department of Electrical and Systems Engineering, University of Pennsylvania.
e-mail: \mbox{\texttt{\{luizf,aribeiro\}@seas.upenn.edu}}.}
\thanks{Y.C. Eldar is with the Math and Computer Science Department, Weizmann Institute of Science. e-mail: \mbox{\texttt{yonina.eldar@weizmann.ac.il}}}
\thanks{Part of the results in this paper appeared in~\cite{Chamon18s, Chamon19s}.}%
\thanks{Mr.\ Chamon's and Dr.\ Ribeiro's work is supported by ARL DCIST CRA W911NF-17-2-0181.}
}
\begin{document}
\maketitle
\begin{abstract}

Signal processing is rich in inherently continuous and often nonlinear applications, such as spectral estimation, optical imaging, and super-resolution microscopy, in which sparsity plays a key role in obtaining state-of-the-art results. Coping with the infinite dimensionality and non-convexity of these problems typically involves discretization and convex relaxations, e.g., using atomic norms. Nevertheless, grid mismatch and other coherence issues often lead to discretized versions of sparse signals that are not sparse. Even if they are, recovering sparse solutions using convex relaxations requires assumptions that may be hard to meet in practice. What is more, problems involving nonlinear measurements remain non-convex even after relaxing the sparsity objective. We address these issues by directly tackling the continuous, nonlinear problem cast as a sparse functional optimization program. We prove that when these problems are non-atomic, they have no duality gap and can therefore be solved efficiently using duality and~(stochastic) convex optimization methods. We illustrate the wide range of applications of this approach by formulating and solving problems from nonlinear spectral estimation and robust classification.

\end{abstract}

\begin{IEEEkeywords}
Functional optimization, sparsity, nonlinear models, strong duality, compressive sensing.
\end{IEEEkeywords}

\section{Introduction}
	\label{S:intro}

The analog and often nonlinear nature of the physical world make for two of the main challenges in signal and information processing applications. Indeed, there are many examples of inherently continuous%
\footnote{Throughout this work, we use the term ``continuous'' \emph{only} in contrast to ``discrete'' and not to refer to a smoothness property of signals.}
problems, such as spectral estimation, image recovery, and source localization~\cite{Stoica05s, Pock10g, Ekanadham11r, Barilan14s}, as well as nonlinear ones, e.g., magnetic resonance fingerprinting, spectrum cartography, and manifold data sparse coding~\cite{Ma13m, Bazerque11g, Xie13n}. These challenges are often tackled by imposing structure on the signals. For instance, bandlimited, finite rate of innovation, or union-of-subspaces signals can be processed using an appropriate discrete set of samples~\cite{Unser00s, Vetterli02s, Mishali11x, Eldar15s}. Functions in reproducing kernel Hilbert spaces~(RKHSs) also admit finite descriptions through variational results known as representer theorems~\cite{Kimeldorf70c, Scholkopf01g}. The infinite dimensionality of continuous problems is therefore often overcome by means of sampling theorems. Similarly, nonlinear functions with bounded total variation or lying in an RKHS can be written as a finite linear combination of basis functions. Under certain smoothness assumptions, nonlinearity can be addressed using~``linear-in-the-parameters'' methods.

Due to the limited number of measurements, however, discretization often leads to underdetermined problems. Sparsity priors then play an important role in achieving state-of-the-art results by leveraging the assumption that there exists a signal representation in terms of only a few atoms from an overparametrized dictionary~\cite{Eldar12c, Foucart13m, Eldar15s}. Since fitting these models leads to non-convex~(and possibly NP-hard~\cite{Natarajan95s}) problems, sparsity is typically replaced by a tractable relaxation based on an atomic norm~(e.g., the~$\ell_1$-norm). For linear, incoherent dictionaries, these relaxed problems have be shown to retrieve the desired sparse solution~\cite{Eldar12c, Foucart13m}. Nevertheless, discretized continuous problems rarely meet these conditions. Only in specific instances, such as line spectrum estimation, there exist guarantees for relaxations that forgo discretization~\cite{Mishali11x, Tang13c, Bhaskar13a, Candes14t, Cho15b, Yang15o, Puy17r}.

This discretization/relaxation approach, however, is not always effective. Indeed, discretization can lead to grid mismatch issues and even loss of sparsity: infinite dimensional sparse signals need not be sparse when discretized~\cite{Chi11s, Adcock16g, Adcock17b}. Also, sampling theorems are sensitive to the function class considered and are often asymptotic: results improve as the discretization becomes finer. This leads to high dimensional statistical problems with potentially poor numerical properties~(high condition number). In fact, $\ell_1$-norm-based recovery of spikes on fine grids~(essentially) finds twice the number of actual spikes and the number of support candidate points increases as the number of measurements decreases~\cite{Duval17s1, Duval17s2}. Furthermore, performance guarantees for convex relaxations rely on incoherence assumptions~(e.g., restricted isometry/eigenvalue properties) that may be difficult to meet in practice and are NP-hard to check~\cite{Bandeira13c, Tillmann14c, Natarajan14c}. Finally, these guarantees hold for linear measurements models.

Directly accounting for nonlinearities in sparse models makes a difficult problem harder, since the optimization program remains non-convex even after relaxing the sparsity objective. This is evidenced by the weaker guarantees existing for~$\ell_1$-norm relaxations in nonlinear compressive sensing problems~\cite{Beck13s, Yang16s}. Though ``linear-in-the-parameters'' models, such as splines or kernel methods, may sometimes be used~(e.g., spectrum cartography~\cite{Bazerque11g}), they are not applicable in general. Indeed, the number of kernels needed to represent a generic nonlinear model may be so large that the solution is no longer sparse. What is more, there is no guarantee that these models meet the incoherence assumptions required for the convex relaxation to be effective~\cite{Eldar12c, Foucart13m, Eldar15s}.

In this work, we propose to forgo both discretization and relaxation and directly tackle the continuous problem using \emph{sparse functional programming}. Although sparse functional programs~(SFPs) combine the infinite dimensionality of functional programming with the non-convexity of sparsity and nonlinear atoms, we show that they are tractable under mild conditions~(Theorem~\ref{T:strongDuality}). To do so, this paper develops the theory of sparse functional programming by formulating a general SFP~(Section~\ref{S:sfp}), deriving its Lagrangian dual problem~(Section~\ref{S:dualProblem}), and proving that strong duality holds under mild conditions~(Section~\ref{S:dualityGap}). This shows that SFPs can be solved exactly by means of their dual problems. Moreover, we use this result to obtain a relation between minimizing the support of a function~(``$L_0$-norm'') and its~$L_1$-norm, even though the latter may yield non-sparse solutions~(Section~\ref{S:L1}). We then propose two algorithms to solve SFPs, based on subgradient and stochastic subgradient ascent, by leveraging different numerical integration methods~(Section~\ref{S:dualSolution}). Finally, we illustrate the expressiveness of SFPs by using them to cast different signal processing problems and provide numerical examples to showcase the effectiveness of this approach~(Section~\ref{S:Apps}).

Throughout the paper, we use lowercase boldface letters for vectors~($\bx$), uppercase boldface letters for matrices~($\bX$), calligraphic letters for sets~($\calA$), and fraktur font for measures~($\fkh$). In particular, we denote the Lebesgue measure by~$\fkm$. We use~$\setC$ to denote the set of complex numbers, $\setR$ for real numbers, and~$\setR_+$ for non-negative real numbers. For a complex number~$z = a + jb$, $j = \sqrt{-1}$, we write~$\Re[z] = a$ for its real part and~$\Im[z] = b$ for its imaginary part. We use~$\bz^H$ for the conjugate transpose of the complex vector~$\bz$, $[\bz]_i$ to indicate its~$i$-th component, $\abs{\calA}$ for the cardinality of~$\calA$, and~$\supp(X) = \{\bbeta \in \Omega \mid X(\beta) \neq 0\}$ for the support of~$X: \Omega \to \setC$. For two vectors~$\bx,\by \in \setR^n$, we write~$\bx \succeq \by$ to denote that~$[\bx]_i \geq [\by]_i$ for all~$i = 1,\dots,n$. We define the indicator function~$\indicator: \Omega \to \{0,1\}$ as~$\indicator(\bbeta \in \calE) = 1$, if $\bbeta$ belongs to the event~$\calE$, and zero otherwise.

\section{Sparse functional programs and duality}
	\label{S:sfp}

\subsection{Sparse functional programs}

SFPs are variational problems that seek sparsest functions, i.e., functions with minimum support measure. Explicitly, let~$(\Omega,\calB)$ be a measurable space in which~$\calB$ are the Borel sets of~$\Omega$, a compact set of~$\setR^n$. In a parallel with the discrete case, define the~$L_0$-norm%
\footnote{As in the discrete case, the ``$L_0$-norm'' in~\eqref{E:l0norm} is not a norm. We however omit the quotation marks so as not to burden the text.}
to be the measure of the support of a function, i.e., for a measurable function~$X: \Omega \to \setC$,
\begin{equation}\label{E:l0norm}
	\norm{X}_{L_0} = \fkm\left[ \supp(X) \right] =
		\int_{\Omega} \indicator\left[ X(\bbeta) \neq 0 \right] d\bbeta
		\text{.}
\end{equation}
Note that the integral in~\eqref{E:l0norm} is a multivariate integral over vectors~$\bbeta \in \Omega$. Unless otherwise specified, all integrals are taken with respect to the Lebesgue measure~$\fkm$.

A general SFP is then defined as the optimization problem
\begin{prob}[P-SFP]\label{P:generalSFP}
	\minimize_{X \in \calX,\, \bz \in \setC^p}&
		&&\int_{\Omega} F_0 \left[ X(\bbeta),\bbeta \right] d\bbeta
			+ \lambda \norm{X}_{L_0}
	\\
	\subjectto& &&g_i(\bz) \leq 0
		\text{,} \quad i = 1,\dots,m
	\\
	&&&\bz = \int_{\Omega} \bF \left[ X(\bbeta), \bbeta \right] d\bbeta
	\\
	&&&X(\bbeta) \in \calP \text{ a.e.}
\end{prob}
where~$\lambda > 0$ is a regularization parameter that controls the sparsity of the solution; $g_i: \setC^p \to \setR$ are convex functions; $F_0: \setC \times \Omega \to \setR$ is an optional, not necessarily convex regularization term~(e.g., take~$F_0(x,\bbeta) = \abs{x}^2$ for shrinkage); $\bF: \setC \times \Omega \to \setC^p$ is a vector-valued~(possibly nonlinear) function; $\calP$ is a~(possibly non-convex) set defining an almost everywhere~(a.e.) pointwise constraints on~$X$, i.e., a constraint that holds for all~$\bbeta \in \Omega$ except perhaps over a set of measure zero~(e.g., $\calP = \{x \in \setC \mid \abs{x} \leq \Gamma\}$ for some~$\Gamma > 0$); and~$\calX$ is a \emph{decomposable} function space, i.e., if~$X,X^\prime \in \calX$, then for any~$\calZ \in \calB$ it holds that~$\bar{X} \in \calX$ for
\begin{equation*}
	\bar{X}(\bbeta) =
	\begin{cases}
		X(\bbeta) \text{,} &\bbeta \in \calZ
		\\
		X^\prime(\bbeta) \text{,} &\bbeta \notin \calZ
	\end{cases}
		\text{.}
\end{equation*}
Lebesgue spaces~(e.g., $\calX = L_2$ or~$\calX = L_\infty$) or more generally Orlicz spaces are typical examples of decomposable function spaces. The spaces of constant or continuous functions, for instance, are \emph{not} decomposable~\cite{Rockafellar98v}.

The linear, continuous sparse recovery/denoising problem is a particular case of~\eqref{P:generalSFP}. Here, we seek to represent a signal~$\by \in \setR^p$ as a linear combination of a continuum of atoms~$\phi(\beta)$ indexed by~$\beta \in \Omega \subset \setR$, i.e., as~$\int_\Omega X(\beta) \phi(\beta) d\beta$, using a sparse functional coefficient~$X$. This problem can be posed as
\begin{prob}\label{P:illustration}
	\minimize_{X \in L_2,\, \byh \in \setR^p}&
		&&\norm{X}_{L_2}^2 + \lambda \norm{X}_{L_0}
	\\
	\subjectto& &&\norm{\by - \byh}_2^2 \leq \epsilon
	\\
	&&&\byh = \int_{\Omega} X(\beta) \phi(\beta) d\beta
		\text{,}
\end{prob}
where~$\epsilon > 0$ is a goodness-of-fit parameter. Notice that when discretized, this problem yields the classical~(NP-hard~\cite{Natarajan95s}) dictionary denoising problem with~$\byh = \bphi^T \bx$, where~$\bphi = [\phi(\beta_j)]$ and~$\bx = [X(\beta_j)]$, for a set of~$\beta_j \in \Omega$, $j = 1,\dots,m$.

The expressiveness of SFPs comes from their ability to accommodate nonlinear measurement models~(through~$\bF$) and non-convex objective functions. For instance, \eqref{P:illustration} can also be posed using the nonlinear model
\begin{equation}
	\byh = \int_{\Omega} \rho \left[ X(\beta) \phi(\beta) \right] d\beta
		\text{,}
\end{equation}
where~$\rho$ represents, for instance, a source saturation~(as in, e.g., Section~\ref{S:Apps}). Yet, the abstract formulation in~\eqref{P:generalSFP} certainly obfuscates the applicability of SFPs. Additionally, severe technical challenges, such as infinite dimensionality and non-convexity, appear to hinder their usefulness. We defer the issue of applicability to Section~\ref{S:Apps}, where we illustrate the use of SFPs in the context of nonlinear spectral estimation and nonlinear functional data analysis. Instead, we first focus on whether problems of the form~\eqref{P:generalSFP} can even be solved. Indeed, note that the discrete versions of certain SFPs are known to be NP-hard~\cite{Natarajan95s}. Hence, discretizing the functional problem in this case makes it intractable.

We propose to solve SFPs using duality. It is worth noting that duality is often used to solve semi-infinite convex programs~\cite{Shapiro06d, Tang13c, Bhaskar13a, Candes14t}. In these cases, strong duality holds under mild conditions and solving the dual problem leads to a solution of the original optimization problem of interest. However, SFPs are not convex. To address this issue, we first derive the dual problem of~\eqref{P:generalSFP} in the next section, noting that it is both finite dimensional and convex. Then, we show that we can obtain a solution of~\eqref{P:generalSFP} from a solution of its dual by proving that SFPs have zero duality gap under quite general conditions~(Section~\ref{S:dualityGap}). Finally, we suggest different algorithms to solve the dual problem of~\eqref{P:generalSFP}~(Section~\ref{S:dualSolution}).

\subsection{The Lagrangian dual of sparse functional programs}
	\label{S:dualProblem}

To formulate the dual problem of~\eqref{P:generalSFP}, we first introduce the Lagrange multipliers~$\nu_i \in \setR_+$, corresponding to the inequalities~$g_i(\bz) \leq 0$, and~$\bmu_R,\bmu_I \in \setR^p$, corresponding to the real and imaginary parts respectively of the complex-valued equality~$\bz = \int_{\Omega} \bF \left[ X(\bbeta), \bbeta \right] d\bbeta$. To simplify the derivations, we collect the former into the vector~$\bnu \in \setR_+^m$ and combine the latter two multipliers into a single complex-valued dual variable by noticing that for any vector~$\bx \in \setC^m$, it holds that~$\bmu_R^T \Re[ \bx ] + \bmu_I^T \Im[ \bx ] = \Re \left[ \bmu^H \bx \right]$, where~$\bmu = \bmu_R + j\bmu_I$.

The Lagrangian dual of~\eqref{P:generalSFP} is then defined as
\begin{equation}\label{E:lagrangian}
\begin{aligned}
	\calL(X, \bz, \bmu, \bnu) &=
	\int_{\Omega} F_0 \left[ X(\bbeta),\bbeta \right] d\bbeta
		+ \lambda \norm{X}_{L_0}
	\\
	{}&+ \Re \left[
		\bmu^H \left( \int_{\Omega} \bF \left[ X(\bbeta), \bbeta \right] d\bbeta
		- \bz \right)
	\right]
	\\
	{}&+ \sum_{i = 1}^m \nu_i g_i(\bz)
\end{aligned}
\end{equation}
and its dual function is given by
\begin{equation}\label{E:preDualFunction1}
	d(\bmu, \bnu) = \min_{\substack{X \in \calX,\, \bz \in \setC^p, \\ X(\bbeta) \in \calP}}
		\calL(X, \bz, \bmu, \bnu)
		\text{.}
\end{equation}
The fact that the pointwise constraint holds almost everywhere in~$\Omega$ is omitted for conciseness. Thus, the dual problem of~\eqref{P:generalSFP} is given by
\begin{prob}[D-SFP]\label{P:dualSFP}
	\maximize_{\bmu,\ \bnu \succeq 0}& &&d(\bmu,\bnu)
		\text{.}
\end{prob}

By definition, \eqref{P:dualSFP} is a convex program whose dimensionality is equal to the number of constraints~\cite{Boyd04c}---in this case, on the order of~$p$. It is therefore tractable as long as we can evaluate the dual function~$d$. Indeed, solving~\eqref{P:dualSFP} is at least as hard as solving the minimization in~\eqref{E:preDualFunction1}. We next show that the dual function of SFPs is often efficiently computable.

The joint minimization in~\eqref{E:preDualFunction1} separates as
\begin{equation}\label{E:preDualFunction2}
\begin{aligned}
	d(\bmu, \bnu) &= d_X(\bmu) + d_{\bz}(\bmu,\bnu)
\end{aligned}
\end{equation}
with
\begin{equation}\label{E:dX}
\begin{aligned}
	d_X(\bmu) &= \min_{\substack{X \in \calX, \\ X(\bbeta) \in \calP}}
		\int_{\Omega} \left\{ F_0 \left[ X(\bbeta),\bbeta \right]
		+ \lambda \indicator\left[ X(\bbeta) \neq 0 \right]
		\vphantom{\sum}\right.
	\\
	&\qquad\qquad\qquad \left.\vphantom{\sum}
	{}+ \Re \left[ \bmu^H \bF \left[ X(\bbeta), \bbeta \right] \right] \right\} d\bbeta
\end{aligned}
\end{equation}
and~$d_{\bz}(\bmu,\bnu) = \min_{\bz} \sum_{i = 1}^m \nu_i g_i(\bz) - \Re[ \bmu^H \bz ]$. The minimum in~$d_{\bz}$ is tractable since the objective is convex, given that~$\nu_i \geq 0$ and the~$g_i$ are convex functions. In certain cases, e.g., when~$g_i$ is a quadratic loss, $d_{\bz}$ may even have a closed-form expression. On the other hand, $d_X$ is in general a non-convex problem. When~$F_0$ and~$\bF$ are normal integrands~\cite[Def.~14.27]{Rockafellar98v}, this issue is addressed by exploiting the separability of the objective across~$\bbeta$ as shown in Proposition~\ref{T:dualXSolution}. Examples of normal integrands include functions~$f( x,\bbeta )$ that are continuous in~$x$ for all fixed~$\bbeta$ and measurable in~$\bbeta$ for all fixed~$x$~(also known as Carath\'{e}odory) or when~$\Omega$ is Borel and~$f( \cdot,\bbeta )$ is lower semicontinuous for all fixed~$\bbeta$~\cite{Rockafellar98v}. Note that these functions can be nonlinear and need not be convex.

\begin{proposition}\label{T:dualXSolution}

Consider the functional optimization problem in~\eqref{E:dX} and assume that~$F_0$ and the elements of~$\bF$ are normal integrands. Let~$\gamma^{(0)}(\bmu,\bbeta) = F_0(0,\bbeta) + \Re \left[ \bmu^H \bF(0, \bbeta) \right]$ and define
\begin{equation}\label{E:gammao}
	\gamma^o(\bmu,\bbeta) = \min_{x \in \calP} F_0(x,\bbeta) + \Re \left[ \bmu^H \bF(x, \bbeta) \right]
		\text{.}
\end{equation}
Then, for~$\calS(\bmu) = \{\bbeta \in \Omega : \gamma^o(\bmu,\bbeta) < \gamma^{(0)}(\bmu,\bbeta) - \lambda\}$,
\begin{equation}\label{E:dX_value}
	d_X(\bmu) = \int_{\calS(\bmu)} \left[ \lambda + \gamma^o(\bmu,\bbeta) \right] d\bbeta
		+ \int_{\Omega \setminus \calS(\bmu)} \gamma^{(0)}(\bmu,\bbeta) d\bbeta
		\text{.}
\end{equation}
\end{proposition}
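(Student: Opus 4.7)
The plan is to exploit the pointwise separability of the integrand in~\eqref{E:dX}: at each fixed~$\bbeta \in \Omega$, it depends on~$X$ only through the scalar~$X(\bbeta)$. This suggests pushing the minimum inside the integral and studying, for each~$\bbeta$, the scalar problem $\min_{x \in \calP} \{F_0(x,\bbeta) + \lambda\indicator[x \neq 0] + \Re[\bmu^H \bF(x,\bbeta)]\}$. Because of the indicator, this inner minimum splits into two candidates: the choice~$x = 0$ gives~$\gamma^{(0)}(\bmu,\bbeta)$, while any~$x \neq 0$ gives at least~$\lambda + \gamma^o(\bmu,\bbeta)$, with equality whenever the minimizer in~\eqref{E:gammao} is nonzero. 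A one-line case analysis then shows that the pointwise minimum equals~$\lambda + \gamma^o$ precisely on~$\calS(\bmu)$ and~$\gamma^{(0)}$ on~$\Omega \setminus \calS(\bmu)$, which if lifted to the integral yields~\eqref{E:dX_value}.

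I would organize the proof as a matching pair of inequalities. For the lower bound, I would take any admissible~$X$, apply the pointwise estimate just described to its integrand, and integrate over~$\Omega$ to obtain $d_X(\bmu) \geq$ the right-hand side of~\eqref{E:dX_value}. For the upper bound, I would need to exhibit some~$X^\star \in \calX$ with $X^\star(\bbeta) \in \calP$ a.e., whose integrand almost everywhere equals this pointwise minimum.

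The main obstacle is precisely constructing~$X^\star$, which involves three technical checks. (i)~That $\gamma^o(\bmu,\cdot)$ and $\gamma^{(0)}(\bmu,\cdot)$ are measurable in~$\bbeta$, so that $\calS(\bmu)$ is a Borel set and the integrals in~\eqref{E:dX_value} are well-defined. (ii)~That a measurable selector $\bbeta \mapsto x^\star(\bbeta) \in \calP$ attaining the minimum in~\eqref{E:gammao} exists. (iii)~That the glued function equal to $x^\star(\bbeta)$ on $\calS(\bmu)$ and to zero on $\Omega \setminus \calS(\bmu)$ actually belongs to~$\calX$. Items~(i) and~(ii) are where the normal-integrand hypothesis on $F_0$ and~$\bF$ does its essential work: standard measurable-projection and measurable-selection results from variational analysis~\cite{Rockafellar98v} guarantee that the infimum of a normal integrand over~$\calP$ is itself measurable and admits a measurable minimizer. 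Item~(iii) is then immediate from the decomposability hypothesis on~$\calX$, which is tailor-made for stitching together measurable pieces along a measurable partition. Substituting this~$X^\star$ back into~\eqref{E:dX} gives the reverse inequality and closes the two-sided bound.
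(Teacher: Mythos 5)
Your proposal is correct and follows essentially the same route as the paper: push the minimization inside the integral, then resolve the scalar problem at each~$\bbeta$ by comparing the value~$\gamma^{(0)}(\bmu,\bbeta)$ at~$x=0$ against~$\lambda + \gamma^o(\bmu,\bbeta)$ for~$x \neq 0$, which yields exactly the split over~$\calS(\bmu)$ and its complement. The only difference is that the paper justifies the interchange by citing the separability principle for normal integrands over decomposable spaces~\cite[Thm.~14.60]{Rockafellar98v} as a black box, whereas you re-derive that interchange via the two-sided inequality with measurable selection and decomposable gluing --- precisely the ingredients that theorem packages.
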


\begin{proof}

We start by separating the objective of~\eqref{E:dX} using the following lemma:

\begin{lemma}[{Separability principle~\cite[Thm.~14.60]{Rockafellar98v}}]\label{T:minInt}
	Let~$G(x,\bbeta)$ be a normal integrand and~$\calX$ be a decomposable space. Then,
	\begin{equation}\label{E:exchangeInf}
		\inf_{\substack{X \in \calX \\ X(\bbeta) \in \calP}} \int_\Omega
			G\left[ X(\bbeta), \bbeta \right] d\bbeta =
			\int_\Omega \inf_{x \in \calP} G(x,\bbeta) d\bbeta
			\text{.}
	\end{equation}
\end{lemma}

\noindent Since~$\Omega$ is a compact subset of~$\setR^n$ and the indicator function is lower semicontinuous~\cite[Ex.~14.31]{Rockafellar98v}, the integrand in~\eqref{E:dX} is normal and we can restrict ourselves to solving the optimization individually for each~$\bbeta$, i.e., if~$X_d$ is a solution of~\eqref{E:dX}, then
\begin{prob}\label{P:dualMinimizer}
	X_d(\bbeta) \in \argmin_{x \in \calP} F_0 \left[ x,\bbeta \right]
		+ \lambda \indicator\left[ x \neq 0 \right]
		+ \Re \left[ \bmu^H \bF \left[ x, \bbeta \right] \right]
		\text{.}
\end{prob}
Despite the non-convexity of the indicator function, \eqref{P:dualMinimizer} is a scalar problem, whose solution involves a simple thresholding scheme. Indeed, only two conditions need to be checked: (i)~if~$X_d(\bbeta) = 0$, then the indicator function vanishes and the objective of~\eqref{P:dualMinimizer} evaluates to~$\gamma^{(0)}(\bbeta)$; (ii)~if~$X_d(\bbeta) \neq 0$, then the indicator function is one and the objective of~\eqref{P:dualMinimizer} evaluates to~$\lambda + \gamma^o(\bbeta)$. The value of~\eqref{P:dualMinimizer} is the minimum of these two cases, which from Lemma~\ref{T:minInt} yields the desired result in~\eqref{E:dX_value}.
\end{proof}

Proposition~\ref{T:dualXSolution} provides a practical way to evaluate~\eqref{E:preDualFunction2}, i.e., to evalute the objective of the dual problem~\eqref{P:dualSFP}. Still, it relies on the ability to efficiently solve~\eqref{E:gammao}, which may be an issue if~$F_0$, $\bF$, or~$\calP$ are non-convex. Nevertheless, \eqref{E:gammao} remains a scalar problem that can typically be solved efficiently using global optimization techniques~\cite{Hendrix10i} or through efficient local search procedures~(see Section~\ref{S:Apps}).

The tractability of the dual problem~\eqref{P:dualSFP} does not imply that it provides a solution to the original problem~\eqref{P:generalSFP}. In fact, since SFPs are not convex programs it is not immediate that~\eqref{P:dualSFP} is worth solving at all: there is no reason to expect that the optimal value of~\eqref{P:dualSFP} is anything more than a lower bound on the optimal value of~\eqref{P:generalSFP}~\cite{Boyd04c}. In the sequel, we proceed to show that this is not the case and that we can actually obtain a solution of~\eqref{P:generalSFP} by solving~\eqref{P:dualSFP}.

\section{Strong duality and its implications}
	\label{S:dualityGap}

Though we have argued that the dual problem of~\eqref{P:generalSFP} is potentially tractable, we are ultimately interested in solving~\eqref{P:generalSFP} itself. This section tackles this limitation by showing that~\eqref{P:generalSFP} and~\eqref{P:dualSFP} have the same values~(Theorem~\ref{T:strongDuality}). In Section~\ref{S:dualSolution}, we show how, under mild conditions, this result allows us to efficiently find a solution for~\eqref{P:generalSFP}. Before that, however, we use strong duality to derive a relation between SFPs and~$L_1$-norm optimization problems when their solution saturates~(Section~\ref{S:L1}).

\subsection{Strong duality of sparse functional programs}

The main result of this section is presented in the following theorem:

\begin{theorem}\label{T:strongDuality}

Suppose that~$F_0$ and~$\bF$ have no point masses~(Dirac deltas) and that there exists a~\eqref{P:generalSFP}-feasible pair~$(X^\prime,\bz^\prime)$, i.e., $X^\prime \in \calX$ with~$X^\prime(\bbeta) \in \calP$ a.e.\ and~$\bz^\prime \in \setC^p$, such that~$\bz^\prime = \int_{\Omega} \bF \left[ X^\prime(\bbeta), \bbeta \right] d\bbeta$ and~$g_i(\bz^\prime) < 0$ for all~$i = 1,\dots,m$. Then, strong duality holds for~\eqref{P:generalSFP}, i.e., if~$P^\star$ is the optimal value of~\eqref{P:generalSFP} and~$D^\star$ is the optimal value of~\eqref{P:dualSFP}, then~$P^\star = D^\star$.

\end{theorem}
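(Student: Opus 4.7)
The plan is to prove strong duality via the classical perturbation-function approach. For $(\bw, \bs) \in \setC^p \times \setR^m$, define the value function $v(\bw,\bs)$ as the infimum of $\int_\Omega F_0[X(\bbeta),\bbeta]\, d\bbeta + \lambda \norm{X}_{L_0}$ over all pairs $(X, \bz)$ satisfying $X \in \calX$, $X(\bbeta) \in \calP$ a.e., $\bz = \bw + \int_\Omega \bF[X(\bbeta),\bbeta]\, d\bbeta$, and $g_i(\bz) \leq s_i$ for $i=1,\dots,m$; then $P^\star = v(\bzero,\bzero)$. Standard convex duality shows that once $v$ is convex and $(\bzero,\bzero)$ lies in the relative interior of its effective domain, every subgradient $(\bmu^\star, -\bnu^\star) \in \partial v(\bzero,\bzero)$ produces Lagrange multipliers with $d(\bmu^\star,\bnu^\star) = v(\bzero,\bzero) = P^\star$; monotonicity of $v$ in $\bs$ forces $\bnu^\star \succeq \bzero$, and combining with weak duality yields $D^\star = P^\star$.

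The crux---and the main obstacle, given the non-convexity of $\norm{\cdot}_{L_0}$ and of the pointwise set $\calP$---is establishing convexity of $v$. I would do this via Lyapunov's convexity theorem for non-atomic vector measures. Fix two perturbation points $(\bw_k, \bs_k)$ in the effective domain of $v$, $k=1,2$, with near-optimal witnesses $(X_k, \bz_k)$, and any $\theta \in (0,1)$. Introduce the vector-valued set function $\nu$ on $(\Omega, \calB)$ given by $\nu(\calE) = \left( \int_\calE \Delta F_0(\bbeta)\, d\bbeta,\ \int_\calE \Delta \bF(\bbeta)\, d\bbeta,\ \int_\calE \Delta\chi(\bbeta)\, d\bbeta \right)$, with $\Delta F_0(\bbeta) = F_0[X_1(\bbeta),\bbeta] - F_0[X_2(\bbeta),\bbeta]$, $\Delta \bF(\bbeta) = \bF[X_1(\bbeta),\bbeta] - \bF[X_2(\bbeta),\bbeta]$, and $\Delta \chi(\bbeta) = \indicator[X_1(\bbeta) \neq 0] - \indicator[X_2(\bbeta) \neq 0]$; identifying $\setC^p$ with $\setR^{2p}$, this becomes a finite, signed, real-vector-valued measure. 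Because $F_0$ and $\bF$ contain no Dirac-delta components and $\Delta \chi$ is bounded, every component of $\nu$ is absolutely continuous with respect to $\fkm$, hence $\nu$ is non-atomic. Lyapunov's theorem then delivers a set $\calE_\theta \in \calB$ with $\nu(\calE_\theta) = \theta\, \nu(\Omega)$.

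Using decomposability of $\calX$, define $\bar X(\bbeta) = X_1(\bbeta)$ on $\calE_\theta$ and $\bar X(\bbeta) = X_2(\bbeta)$ on $\Omega \setminus \calE_\theta$, and set $\bar \bz = \theta \bz_1 + (1-\theta) \bz_2$. The choice of $\calE_\theta$ forces $\int_\Omega F_0[\bar X,\bbeta]\, d\bbeta$, $\int_\Omega \bF[\bar X,\bbeta]\, d\bbeta$, and $\norm{\bar X}_{L_0}$ to each equal the $\theta$-convex combinations of their counterparts for $X_1$ and $X_2$. Convexity of each $g_i$ gives $g_i(\bar \bz) \leq \theta s_{1,i} + (1-\theta) s_{2,i}$, while the equality constraint becomes $\bar \bz = \theta \bw_1 + (1-\theta)\bw_2 + \int_\Omega \bF[\bar X,\bbeta]\, d\bbeta$. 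Hence $(\bar X, \bar \bz)$ is admissible for $v$ at $(\theta \bw_1 + (1-\theta) \bw_2,\ \theta \bs_1 + (1-\theta) \bs_2)$ with objective equal to the convex combination of the original costs; taking the witnesses increasingly near-optimal establishes convexity of $v$.

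Finally, the strictly feasible pair $(X^\prime, \bz^\prime)$ from the hypothesis produces a point $(\bzero, \bs^\prime)$ in the effective domain of $v$ with $s_i^\prime < 0$ for every $i$; this is Slater's condition, which combined with the convexity just proved places $(\bzero, \bzero)$ in the relative interior of the effective domain, ensuring $\partial v(\bzero,\bzero) \neq \emptyset$ and closing the argument. The subtle points are (i) verifying that $\nu$ really is non-atomic---this is precisely where the ``no Dirac deltas'' hypothesis on $F_0$ and $\bF$ enters, since atoms would break Lyapunov's theorem---and (ii) reducing the complex-valued equality constraint to a real one by separating real and imaginary parts so that Lyapunov applies in its standard form.
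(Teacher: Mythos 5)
Your proposal is correct and rests on the same essential idea as the paper: Lyapunov's convexity theorem plus decomposability of $\calX$ is what defeats the non-convexity of $\norm{\cdot}_{L_0}$ and of $\calP$. The difference is in the convex-analysis wrapper. The paper works with the cost-constraints (image) set $\calC$ in \eqref{E:cost_constraint}, proves it convex by stacking the integrals of \emph{both} candidate functions into a $2(p+1)$-dimensional vector measure, and then applies the supporting hyperplane theorem at $(P^\star,\bzero,\bzero,\bzero)$, with Slater's condition entering only to rule out a vertical hyperplane ($\lambda_0=0$). You instead prove convexity of the perturbation function $v(\bw,\bs)$ and invoke subdifferentiability at a relative interior point, with Slater's condition guaranteeing that $(\bzero,\bzero)$ sits there; your ``difference measure'' $\nu(\calE)=\int_\calE(\Delta F_0,\Delta\bF,\Delta\chi)$ is a nice economy, halving the dimension of the vector measure while yielding exactly the same splitting set $\calE_\theta$ and the same interpolant $\bar X$. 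The two routes are textbook-equivalent ($\calC$ is essentially the strict epigraph of $v$ in suitable coordinates), and both need the same implicit finiteness of $P^\star$. Two minor points you should tighten if writing this up: (i) for $\nu$ to be a finite signed measure you need $\Delta F_0$ and $\Delta\bF$ integrable, which follows from restricting to witnesses with finite cost and to the finiteness of $\int_\Omega\bF[X_k(\bbeta),\bbeta]\,d\bbeta$ already demanded by feasibility; and (ii) the claim that Slater's condition places $(\bzero,\bzero)$ in the (relative) interior of $\operatorname{dom} v$ uses the continuity of the finite convex functions $g_i$ and the monotonicity of $v$ in $\bs$ --- worth a line, since that is precisely the step the paper replaces by the explicit $\lambda_0\neq 0$ argument.
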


Theorem~\ref{T:strongDuality} states that although~\eqref{P:generalSFP} is a non-convex functional program, it has zero duality gap, suggesting that it can be solved through its tractable dual~\eqref{P:dualSFP}. A noteworthy feature of this approach is that it precludes discretization by tackling~\eqref{P:generalSFP} directly. Discretizing~\eqref{P:generalSFP} may not only result in NP-hard problems, but leads to high dimensional, potentially ill-conditioned problems. It is also worth noting that Theorem~\ref{T:strongDuality} is \emph{non-parametric} in the sense that it makes no assumption on the existence or validity of the measurement model in~\eqref{P:generalSFP}. In particular, it does not require that the data arise from a specific model in which the parameters are sparse. This implies, for instance, that the sparsest functional linear model that fits a set of measurements can be determined regardless of whether these measurements arise from a truly sparse, linear model. This is useful in practice when sparse solutions are sought, not for epistemological reasons, but for reducing computational or measurement costs.

\begin{proof}[Proof of Thm.~\ref{T:strongDuality}]

Recall from weak duality that the dual problem is a lower bound on the value of the primal, so that~$D^\star \leq P^\star$~\cite{Boyd04c}. Hence, it suffices to prove that~$D^\star \geq P^\star$. To do so, denote the cost function of~\eqref{P:generalSFP} by~$f_0(X) \triangleq \int_{\Omega} F_0 \left[ X(\bbeta),\bbeta \right] d\bbeta + \lambda \norm{X}_{L_0}$ and define the cost-constraints set
\begin{equation}\label{E:cost_constraint}
\begin{gathered}
	\calC = \left\{ (c,\bu,\bk_R,\bk_I) \ \Big\vert \  \exists (X,\bz) \in \calX \times \setC^p
	\text{ such that } 
	\vphantom{\int_{\Omega}}\right.\\\left.
	X(\bbeta) \in \calP \text{ a.e.} \text{, }
		f_0(X) \leq c \text{, }
		g_i(\bz) \leq [\bu]_i \text{,}
	\vphantom{\int_{\Omega}}\right.\\\left.
	\text{and } \int_{\Omega} \bF \left[ X(\bbeta), \bbeta \right] d\bbeta - \bz = \bk_R + j \bk_I
	\right\}
		\text{.}
\end{gathered}
\end{equation}
In words, $\calC$ describes the range of values taken by the objective and constraints of~\eqref{P:generalSFP}. Observe that~\eqref{E:cost_constraint} separates the real- and complex-valued parts of the equality constraint in~\eqref{P:generalSFP}. Hence, $\calC \subset \setR^{2p+m+1}$, allowing us to directly leverage classical convex geometry results. The crux of this proof is to show that~$\calC$ is a convex set even though~\eqref{P:generalSFP} is not a convex program. We summarize this result in the following technical lemma whose proof relies on Lyapunov's convexity theorem~\cite{Diestel77v}:

\begin{lemma}\label{T:convexC}

Under the assumptions of Theorem~\ref{T:strongDuality}, the cost-constraints set~$\calC$ in~\eqref{E:cost_constraint} is a non-empty convex set.

\end{lemma}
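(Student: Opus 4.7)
The plan is first to dispatch non-emptiness, which is immediate from the Slater-type hypothesis: taking the strictly feasible pair $(X^\prime, \bz^\prime)$ together with $c = f_0(X^\prime)$, any $\bu$ with $[\bu]_i \geq g_i(\bz^\prime)$, and $\bk_R = \bk_I = \bzero$ produces a tuple in $\calC$.

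For convexity, given two tuples $(c_j, \bu_j, \bk_{R,j}, \bk_{I,j}) \in \calC$ with witnesses $(X_j, \bz_j)$ for $j = 1, 2$, and any $\theta \in [0, 1]$, I will build a witness for the convex combination. The naive choice $X = \theta X_1 + (1 - \theta) X_2$ is hopeless, since $\norm{\cdot}_{L_0}$ is non-convex and $F_0, \bF$ may be nonlinear in $X$. Instead, I will splice $X_1$ and $X_2$ over a measurable partition $\{\calZ, \Omega \setminus \calZ\}$ of $\Omega$, relying on decomposability of $\calX$ to keep the spliced function in the feasible space, and I will take $\bz = \theta \bz_1 + (1 - \theta) \bz_2$ so that convexity of each $g_i$ immediately yields $g_i(\bz) \leq \theta [\bu_1]_i + (1 - \theta) [\bu_2]_i$.

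The key technical ingredient is to choose $\calZ$ so that every integral of interest splits simultaneously in the ratio $\theta : (1 - \theta)$. For this I will assemble a vector-valued set function $\fkv : \calB \to \setR^N$ whose components are $\int_\calA F_0[X_j(\bbeta), \bbeta]\, d\bbeta$, the real and imaginary parts of each entry of $\int_\calA \bF[X_j(\bbeta), \bbeta]\, d\bbeta$, and $\fkm[\supp(X_j) \cap \calA]$, for $j = 1, 2$. The ``no point masses'' hypothesis on $F_0$ and $\bF$ makes each component a finite signed measure absolutely continuous with respect to the non-atomic Lebesgue measure $\fkm$ on the compact set $\Omega$, hence $\fkv$ is itself non-atomic. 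By Lyapunov's convexity theorem, the range of $\fkv$ is convex in $\setR^N$; since $\fkv(\emptyset) = \bzero$ and $\fkv(\Omega)$ equals the vector of totals, there exists $\calZ \in \calB$ with $\fkv(\calZ) = \theta \fkv(\Omega)$, and consequently $\fkv(\Omega \setminus \calZ) = (1 - \theta) \fkv(\Omega)$.

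Defining $X(\bbeta) = X_1(\bbeta)$ for $\bbeta \in \calZ$ and $X(\bbeta) = X_2(\bbeta)$ otherwise, I will verify each defining condition of $\calC$ by splitting the corresponding integral into $\calZ$ and $\Omega \setminus \calZ$ pieces and invoking the Lyapunov identities componentwise. This gives $f_0(X) = \theta f_0(X_1) + (1 - \theta) f_0(X_2) \leq \theta c_1 + (1 - \theta) c_2$, including the $\lambda \norm{X}_{L_0}$ term via the support components, and $\int_\Omega \bF[X(\bbeta), \bbeta]\, d\bbeta - \bz = \theta(\bk_{R,1} + j \bk_{I,1}) + (1 - \theta)(\bk_{R,2} + j \bk_{I,2})$, with real and imaginary parts matching the target convex combination. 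The main obstacle is coupling all of these integrals into a single Lyapunov argument: without the no-point-mass hypothesis the vector measure $\fkv$ would fail to be non-atomic, and without decomposability of $\calX$ the spliced $X$ might not be admissible, so both hypotheses are pulling real weight.
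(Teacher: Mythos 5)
Your proposal is correct and follows essentially the same route as the paper: a vector measure built from the integrands evaluated at the two witness functions, Lyapunov's convexity theorem to find a set splitting all integrals in the ratio $\theta : (1-\theta)$, splicing via decomposability of $\calX$, and the convex combination $\bz_\theta = \theta \bz_1 + (1-\theta)\bz_2$ handled by convexity of the $g_i$. The only cosmetic difference is that you carry the $F_0$ integral and the support measure as separate components of the vector measure, whereas the paper bundles them into $F_0 + \lambda\indicator(\cdot \neq 0)$; the two are interchangeable.
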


\begin{proof}
See Appendix~\ref{A:convexC}.
\end{proof}

We may then leverage the following result from convex geometry:

\begin{proposition}[{Supporting hyperplane theorem~\cite[Prop.~1.5.1]{Bertsekas09c}}]\label{T:hyperplane}
Let~$\calA \subset \setR^n$ be a nonempty convex set. If~$\bxt \in \setR^n$ is not in the interior of~$\calA$, then there exists a hyperplane passing through~$\bxt$ such that~$\calA$ is in one of its closed halfspaces, i.e., there exists~$\bp \neq \bzero$ such that~$\bp^T \bxt \leq \bp^T \bx$ for all~$\bx \in \calA$.

\end{proposition}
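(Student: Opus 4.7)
The plan is to reduce the general case to the special situation where $\bxt$ lies strictly outside the closure $\overline{\calA}$, using the projection theorem onto a closed convex set as the main engine, and then recover the boundary case by a limiting argument.

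First, I would treat the case $\bxt \notin \overline{\calA}$. Since $\overline{\calA}$ is a nonempty closed convex subset of $\setR^n$, the projection theorem produces a unique nearest point $\bx^\star \in \overline{\calA}$ to $\bxt$, characterized by the variational inequality $(\bxt - \bx^\star)^T (\bx - \bx^\star) \leq 0$ for every $\bx \in \overline{\calA}$. Setting $\bp = \bx^\star - \bxt \neq \bzero$, this rearranges to $\bp^T \bx \geq \bp^T \bx^\star = \bp^T \bxt + \norm{\bp}^2 > \bp^T \bxt$, which is even a strict separation.

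Next, I would handle the remaining case $\bxt \in \overline{\calA} \setminus \text{int}(\calA)$, i.e., $\bxt$ is a boundary point of $\calA$. The key step is to produce an approximating sequence $\bxt_k \to \bxt$ with $\bxt_k \notin \overline{\calA}$. When $\text{int}(\calA) \neq \emptyset$, the standard convex-analytic identity $\text{int}(\overline{\calA}) = \text{int}(\calA)$ places $\bxt$ on the topological boundary of $\overline{\calA}$, so such a sequence exists. The degenerate case $\text{int}(\calA) = \emptyset$ is dispatched separately: $\calA$ then lies in a proper affine subspace $H \subset \setR^n$, and any hyperplane containing $H$ qualifies as a supporting hyperplane at $\bxt$. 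For each $\bxt_k$, the first case supplies a nonzero vector $\bp_k$ with $\bp_k^T \bxt_k \leq \bp_k^T \bx$ for all $\bx \in \calA$; after normalizing to $\norm{\bp_k} = 1$, compactness of the unit sphere in $\setR^n$ allows me to extract a subsequence $\bp_{k_j} \to \bp$ with $\norm{\bp} = 1$. Passing to the limit in the inequality yields $\bp^T \bxt \leq \bp^T \bx$ for every $\bx \in \calA$, which is the sought supporting-hyperplane condition.

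The main obstacle is the projection step underlying the first case: establishing existence and uniqueness of the projection onto a closed convex set in $\setR^n$ requires the parallelogram identity to show that a minimizing sequence is Cauchy (convergent by completeness of $\setR^n$), with the limit lying in the set by closedness. The variational characterization then follows by perturbing $\bx^\star$ toward any point of $\overline{\calA}$ along a line segment kept inside the set by convexity and differentiating the squared-distance function. The remaining technicalities---verifying $\text{int}(\overline{\calA}) = \text{int}(\calA)$ for convex $\calA$ with nonempty interior, producing the approximating sequence from outside $\overline{\calA}$, and reducing the empty-interior case to containment in a proper affine subspace---are routine consequences of finite-dimensional convex geometry.
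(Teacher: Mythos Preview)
Your argument is correct and is essentially the standard textbook proof of the supporting hyperplane theorem: projection onto a closed convex set to strictly separate an exterior point, followed by an approximation-and-compactness argument for boundary points, with the empty-interior case handled via containment in a proper affine subspace. There is nothing to fault in the logic.

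However, note that the paper does not supply its own proof of this proposition. It is quoted directly from Bertsekas~\cite[Prop.~1.5.1]{Bertsekas09c} and used as a black box inside the proof of Theorem~\ref{T:strongDuality}. So there is no ``paper's proof'' to compare against here; your write-up simply fills in the details of a classical result that the authors take for granted. Incidentally, the proof in Bertsekas follows the same projection-plus-limiting-argument strategy you outline.
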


Formally, start by observing that the point~$(P^\star,\bzero,\bzero,\bzero)$ cannot be in the interior of~$\calC$. Indeed, there would otherwise exist~$\delta > 0$ such that~$(P^\star-\delta,\bzero,\bzero,\bzero) \in \calC$, violating the optimality of~$P$. Proposition~\ref{T:hyperplane} therefore implies that there exists a non-zero vector~$(\lambda_0,\bnu,\bmu_R,\bmu_I) \in \setR^{2p+m+1}$ such that for all~$(c,\bu,\bk_R,\bk_I) \in \calC$,
\begin{equation}\label{E:hyperplane_bound}
	\lambda_0 c + \bnu^T \bu + \bmu_R^T \bk_R + \bmu_I^T \bk_I \geq \lambda_0 P^\star
		\text{.}
\end{equation}
Observe that the vector defining the hyperplane uses the same notation as for the dual variables of~\eqref{P:generalSFP} foreshadowing the fact that these hyperplanes span the values of the Lagrangian~\eqref{E:lagrangian}. From~\eqref{E:hyperplane_bound}, we immediately obtain that~$\lambda_0 \geq 0$ and~$\bnu \succeq \bzero$. Indeed, note that~$\calC$ is unbounded above in its first~$m + 1$ components, i.e., if~$(c,\bu,\bk_R,\bk_I) \in \calC$ then~$(c^\prime,\bu^\prime,\bk_R,\bk_I) \in \calC$ for any~$(c^\prime,\bu^\prime) \succeq (c,\bu)$. Hence, if any component of~$\lambda_0$ or~$\bnu$ were negative, there would exist a vector in~$\calC$ that makes the left-hand side of~\eqref{E:hyperplane_bound} arbitrarily small, eventually violating the inequality. Let us now show that~$\lambda_0 \neq 0$.

To do so, suppose~$\lambda_0 = 0$. Then, \eqref{E:hyperplane_bound} reduces to
\begin{equation}\label{E:lambda0_inequality}
	\bnu^T \bu + \bmu_R^T \bk_R + \bmu_I^T \bk_I \geq 0
		\text{,}
\end{equation}
for all~$(c,\bu,\bk_R,\bk_I) \in \calC$. However, \eqref{E:lambda0_inequality} leads to a contradiction because its left-hand side can always be made negative. Indeed, if~$[\bnu]_i > 0$ for any~$i$, then the hypothesis on the existence of a strictly feasible point for~\eqref{P:generalSFP} violates~\eqref{E:lambda0_inequality}. Explicitly, since there exists~$(X^\prime,\bz^\prime)$ such that~$\int_{\Omega} \bF \left[ X^\prime(\bbeta), \bbeta \right] d\bbeta = \bz^\prime$ and~$g_i(\bz^\prime) < 0$ for all~$i = 1,\dots,m$, then~$(c_0,-\delta \ones,\bzero,\bzero) \in \calC$ for some~$c_0$ and~$\delta > 0$, where~$\ones$ is a vector of ones. Thus, if~$[\bnu]_i > 0$ for any~$i$, we obtain~$-\delta (\bnu^T \ones) < 0$, which violates~\eqref{E:lambda0_inequality}.
	
On the other hand, if~$\bnu = \bzero$, then~\eqref{E:lambda0_inequality} reduces to~$\bmu_R^T \bk_R + \bmu_I^T \bk_I \geq 0$ which cannot holds because for~$(\bar{c},\bar{\bu},-\bmu_R,-\bmu_I) \in \calC$, \eqref{E:lambda0_inequality} evaluates to~$-\norm{\bmu_R}^2 -\norm{\bmu_I}^2 < 0$. To see that this vector is indeed an element of~$\calC$, simply choose any~$\bar{X} \in \calC$ with~$\bar{X}(\beta) \in \calP$~a.e.\ and let~$\bar{\bz} = - \bmu_R - j \bmu_I - \int_{\Omega} \bF \left[ \bar{X}(\bbeta), \bbeta \right] d\bbeta$, $[\bar{\bu}]_i = g_i(\bar{\bz})$, and~$\bar{c} = f_0(\bar{X})$. Hence, it must be that~$\lambda_0 \neq 0$.

However, for~$\lambda_0 \neq 0$, \eqref{E:hyperplane_bound}
\begin{equation*}
	c + \bm{{\tilde{\nu}}}^T \bu + \bm{{\tilde{\mu}}}_R^T \bk_R + \bm{{\tilde{\mu}}}_I^T \bk_I
		\geq P
		\text{,}
\end{equation*}
where~$\bm{{\tilde{\nu}}} = \bnu/\lambda_0$, $\bm{{\tilde{\mu}}}_R = \bmu_R/\lambda_0$, and~$\bm{{\tilde{\mu}}}_I = \bmu_I/\lambda_0$, which from the definition of~$\calC$ implies that
\begin{multline}\label{E:case_ii}
	f_0(X) + \sum_{i = 1}^m \tilde{\nu}_i g_i(\bz)
	+ \bm{{\tilde{\mu}}}_R^T \Re \left[ \int_{\Omega} \bF \left[ X(\bbeta), \bbeta \right] d\bbeta
		- \bz \right]
	\\
	+ \bm{{\tilde{\mu}}}_I^T \Im \left[ \int_{\Omega} \bF \left[ X(\bbeta), \bbeta \right] d\bbeta
		- \bz \right] \geq P^\star
		\text{,}
\end{multline}
for any~\eqref{P:generalSFP}-feasible pair~$(X,\bz)$. Letting~$\bm{{\tilde{\mu}}} = \bm{{\tilde{\mu}}}_R + j \bm{{\tilde{\mu}}}_I$, we recognize that~\eqref{E:case_ii} in fact bounds the value of the Lagrangian in~\eqref{E:lagrangian} for any~\eqref{P:generalSFP}-feasible pair~$(X,\bz)$, i.e., $\calL(X, \bz, \bm{{\tilde{\mu}}}, \bm{{\tilde{\nu}}}) \geq P^\star$. Taking the minimum of the left-hand side of~\eqref{E:case_ii} hence implies~$D^\star \geq P^\star$, thus concluding the proof.
\end{proof}

\subsection{SFPs and $L_1$-norm optimization problems}
	\label{S:L1}

Similar to the discrete case, there is a close relation between~$L_0$- and~$L_1$-norm minimization. Formally, consider
\begin{prob}[$\text{P}_q$]\label{P:L0L1}
	\minimize_{X \in L_\infty,\, \bz \in \calC^p}& &&\norm{X}_{L_q}
	\\
	\subjectto& &&g_i(\bz) \leq 0
	\\
	&&&\bz = \int_{\Omega} \bF [ X(\bbeta), \bbeta ] d\bbeta
	\\
	&&&\abs{X} \leq \Gamma \text{ a.e.}
\end{prob}
Problem~($\text{P}_0$)~[i.e., \eqref{P:L0L1} with~$q = 0$] is an instance of~\eqref{P:generalSFP} without regularization~($F_0 \equiv 0$) in which~$\calP$ is the set of measurable functions bounded by~$\Gamma > 0$. On the other hand, ($\text{P}_1$)~[\eqref{P:L0L1} for~$q = 1$] is a functional version of the classical~$\ell_1$-norm minimization problem. The following proposition shows that for a wide class of dictionaries, the optimal values of~($\text{P}_0$) and~($\text{P}_1$) are the same~(up to a constant).

\begin{proposition}\label{T:L0L1}

Let~$x^o(\bmu, \bbeta) = \argmin_{\abs{x} \leq \Gamma} \abs{x} + \Re\left[ \bmu^T \bF(x,\bbeta) \right]$ saturate, i.e., $x^o(\bmu, \bbeta) \neq 0 \Rightarrow \abs{x^o(\bmu, \bbeta)} = \Gamma$ for all~$\bmu \in \setC^p$ and~$\bbeta \in \Omega$. If~$P_0^\star$~($P_1^\star$) is the optimal value of~\eqref{P:L0L1} for~$q = 0$~($q = 1$) and Slater's condition holds, then
\begin{equation*}
	P_0^\star = \frac{P_1^\star}{\Gamma}
		\text{.}
\end{equation*}

\end{proposition}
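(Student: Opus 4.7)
The plan is to prove the equality via strong duality, by showing that after a simple rescaling of the regularization weight in (P$_0$), both problems have identical dual functions at every dual point. Concretely, since the objective of (P$_0$) is $\norm{X}_{L_0}$, multiplying it by $\Gamma$ just rescales the optimal value to $\Gamma P_0^\star$; I work with this rescaled version, which matches~\eqref{P:generalSFP} with $F_0\equiv 0$ and $\lambda=\Gamma$. Theorem~\ref{T:strongDuality} yields strong duality for this rescaled (P$_0$) under Slater's condition, while the same Lyapunov-type argument (Lemma~\ref{T:convexC}) applied to (P$_1$) with $F_0(x,\bbeta)=\abs{x}$ (a normal integrand without point masses) gives strong duality for (P$_1$) as well. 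It therefore suffices to exhibit equality of the two dual functions.

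Using Proposition~\ref{T:dualXSolution} for the rescaled (P$_0$) and Lemma~\ref{T:minInt} for (P$_1$), each dual function splits as $d(\bmu,\bnu)=d_X(\bmu)+d_{\bz}(\bmu,\bnu)$, where $d_{\bz}$ is identical for the two problems because their constraints coincide. Writing $g(x,\bbeta)=\Re[\bmu^H\bF(x,\bbeta)]$ and $g^\star(\bmu,\bbeta)=\min_{\abs{x}\leq\Gamma}g(x,\bbeta)$, the rescaled-(P$_0$) integrand in~\eqref{E:dX_value} reads $\min\{g(0,\bbeta),\,\Gamma+g^\star(\bmu,\bbeta)\}$, while the (P$_1$) integrand is the pointwise minimum $\min_{\abs{x}\leq\Gamma}[\abs{x}+g(x,\bbeta)]$. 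The goal reduces to showing these two scalar quantities agree at every $(\bmu,\bbeta)$, which is where the saturation hypothesis is used.

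I split the pointwise comparison into two cases. If $\Gamma+g^\star\geq g(0)$, the (P$_0$) integrand is $g(0)$; I claim the (P$_1$) value is also $g(0)$. It is at most $g(0)$ by evaluating at $x=0$, and any argmin $x^o$ attaining strictly less would satisfy $x^o\neq 0$, hence $\abs{x^o}=\Gamma$ by saturation, forcing $\Gamma+g^\star\leq\Gamma+g(x^o)<g(0)$, contradicting the case assumption. If instead $\Gamma+g^\star<g(0)$, let $\tilde{x}$ attain $g^\star$; then $\abs{\tilde{x}}+g^\star\leq\Gamma+g^\star<g(0)$, so any (P$_1$) argmin $x^o$ is nonzero, and saturation gives $\abs{x^o}=\Gamma$. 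Optimality of $x^o$ combined with the lower bound $g(x^o)\geq g^\star$ then forces $g(x^o)=g^\star$, so the (P$_1$) value equals $\Gamma+g^\star$, matching (P$_0$). Hence $d^{(0,\Gamma)}(\bmu,\bnu)=d^{(1)}(\bmu,\bnu)$ for all $(\bmu,\bnu)$, and maximizing gives $\Gamma P_0^\star=P_1^\star$ by strong duality.

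The main obstacle I anticipate is the case $\Gamma+g^\star\geq g(0)$, where I must use saturation to preclude an interior non-zero minimizer of $\abs{x}+g(x)$ from beating $x=0$; a secondary issue is verifying that Theorem~\ref{T:strongDuality} (or the underlying Lyapunov-based argument in Lemma~\ref{T:convexC}) applies to (P$_1$) despite the absence of an explicit $L_0$ term, which is essentially free since the proof depends only on the decomposability of~$\calX$ and the absence of point masses in the integrands.
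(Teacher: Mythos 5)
Your proof is correct and follows essentially the same route as the paper: compute both dual functions via the separability principle, use the saturation hypothesis to evaluate the $(\text{P}_1)$ pointwise minimizer, and invoke strong duality on both sides (noting, as you do, that the zero duality gap for $(\text{P}_1)$ with nonlinear $\bF$ still rests on the Lyapunov argument). The only difference is where the factor $\Gamma$ is placed---you absorb it into the primal $L_0$ objective so the two dual functions coincide identically, whereas the paper establishes $d_0(\bmu,\bnu) = \Gamma^{-1} d_1(\Gamma\bmu,\Gamma\bnu)$ and then matches maximizers via a critical-point argument; your packaging makes the equality of the dual optima immediate.
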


\begin{proof}

The proof follows by relating the dual values of~\eqref{P:L0L1} for~$q = \{0,1\}$ and then using strong duality. Start by defining the Lagrangian of~\eqref{P:L0L1} as
\begin{equation}\label{E:lagrangianL0L1}
\begin{aligned}
	\calL(X, \bz, \bmu, \bnu) &= \norm{X}_{L_q}
		+ \sum_i \nu_i g_i(\bz)
	\\
	{}&+ \Re \left[
		\bmu^H \left( \int_{\Omega} \bF\left[ X(\bbeta),\bbeta \right] d\bbeta - \bz \right)
	\right]
		\text{.}
\end{aligned}
\end{equation}
For~$q = 0$, Proposition~\ref{T:minInt} yields
\begin{equation}\label{E:d0}
\begin{aligned}
	d_0(\bmu, \nu) &= \int_{\calS_0(\bmu)}
		\left\{ 1 + \min_{\abs{x} \leq \Gamma} \Re\left[ \bmu^H \bF(x,\bbeta) \right] \right\} d\bbeta
	\\
	{}&+ w(\bmu,\bnu)
		\text{.}
\end{aligned}
\end{equation}
where
\begin{equation}\label{E:calS0}
	\calS_0(\bmu) = \{\bbeta \in \Omega \mid
		\min_{\abs{x} \leq \Gamma} \Re\left[ \bmu^H \bF(x,\bbeta) \right] < -1 \}
\end{equation}
and~$w(\bmu,\bnu) = \min_{\bz} \sum_i \nu_i g_i(\bz) - \Re\left[ \bmu^H \bz \right]$. Notice that~$w$ is homogeneous, i.e., $w(\alpha\bmu, \alpha\bnu) = \alpha w(\bmu,\bnu)$ for~$\alpha > 0$. Proceeding similarly from~\eqref{E:lagrangianL0L1}, the dual function of~($\text{P}_1$) is
\begin{equation}\label{E:pred1}
\begin{aligned}
	d_1(\bmu, \nu) &= \int_{\Omega}
		\left\{ \min_{\abs{x} \leq \Gamma} \abs{x} + \Re\left[ \bmu^H \bF(x,\bbeta) \right] \right\}
	d\bbeta
	\\
	{}&+ w(\bmu,\bnu)
		\text{.}
\end{aligned}
\end{equation}
Using the the saturation hypothesis, the integrand in~\eqref{E:pred1} is non-trivial only over the set
\begin{equation}\label{E:calS1}
	\calS_1(\bmu) = \{\bbeta \in \Omega \mid
		\min_{\abs{x} \leq \Gamma} \Re\left[ \bmu^H \bF(x,\bbeta) \right] < -\Gamma \}
		\text{.}
\end{equation}
Hence,
\begin{equation}\label{E:d1}
\begin{aligned}
	d_1(\bmu, \nu) &= \int_{\calS_1(\bmu)}
		\left\{ \Gamma + \min_{\abs{x} \leq \Gamma} \Re\left[ \bmu^H \bF(x,\bbeta) \right] \right\}
	d\bbeta
	\\
	{}&+ w(\bmu,\bnu)
		\text{.}
\end{aligned}
\end{equation}

To proceed, note that the dual functions in~\eqref{E:d0} and~\eqref{E:d1} are related by
\begin{equation}\label{E:relationL0L1}
	d_0(\bmu,\bnu) = \frac{1}{\Gamma} d_1(\Gamma\bmu, \Gamma\bnu)
		\text{.}
\end{equation}
Indeed, observe from~\eqref{E:calS0} and~\eqref{E:calS1} that~$\calS_1(\Gamma \bmu) = \calS_0(\bmu)$. Thus,
\begin{multline*}
	\frac{1}{\Gamma} \int_{\calS_1(\Gamma \bmu)} \left\{
		\Gamma + \min_{\abs{x} \leq \Gamma} \Re\left[ \Gamma \bmu^H \bF(x,\bbeta) \right]
	\right\}
	d\bbeta
	\\
	= \int_{\calS_0(\bmu)}
		\left\{ 1 + \min_{\abs{x} \leq \Gamma} \Re\left[ \bmu^H \bF(x,\bbeta) \right] \right\}
	d\bbeta
		\text{.}
\end{multline*}
The homogeneity of~$w$ then yields~\eqref{E:relationL0L1}. Immediately, it holds that if~$(\bmu^o,\bnu^o)$ is a maximum of~$d_0$, then $(\Gamma\bmu^o,\Gamma\bnu^o)$ is a maximum of~$d_1$. To see this is the case, note from~\eqref{E:relationL0L1} that
\begin{equation*}
	\nabla d_0(\bmu^o,\bnu^o) = \bzero \Leftrightarrow
		\nabla d_1(\Gamma\bmu^o, \Gamma\bnu^o) = \bzero
		\text{,}
\end{equation*}
so that~$(\Gamma\bmu^o, \Gamma\bnu^o)$ is a critical point of~$d_1$. Since~$d_1$ is a concave function, $(\Gamma\bmu^o, \Gamma\bnu^o)$ must be a global maximum.

To conclude, observe that~\eqref{P:L0L1} has zero duality gap for both~$q = 0$, due to Theorem~\ref{T:strongDuality}, and~$q = 1$, because it is a convex program. From~\eqref{E:relationL0L1} we then obtain
\begin{align*}
	P_0^\star &= \max_{\bmu, \bnu \geq 0} d_0(\bmu,\bnu) = d_0(\bmu^\star,\bnu^\star)
	\\
	{}&= \frac{1}{\Gamma} d_1(\Gamma\bmu^\star, \Gamma\bnu^\star)
		= \frac{1}{\Gamma} \max_{\bmu, \bnu \geq 0} d_1(\bmu,\bnu) = \frac{P_1^\star}{\Gamma}
		\text{.}\qedhere
\end{align*}
\end{proof}

Proposition~\ref{T:L0L1} shows that a large class of~$L_0$- and~$L_1$-norm minimization problems found in functional nonlinear sparse recovery are equivalent in the sense that their optimal values are~(essentially) the same. It is worth noting that establishing this relation requires virtually no assumptions: the saturation hypothesis is met by a wide class of dictionaries, most notably linear ones. This is in contrast to the discrete case, where such relations exist for incoherent, linear dictionaries~\cite{Eldar12c, Foucart13m}. Still, Proposition~\ref{T:L0L1} does not imply that the solution of the~$L_0$- and~$L_1$-norm problems are the same, as is the case for discrete results. In fact, though they have the same optimal value, ($\text{P}_1$) admits solutions with larger support~(see Example~\ref{R:L0L1}). Although conditions exist for which the~$L_1$-norm minimization problem with linear dictionaries yields minimum support solutions~\cite{Adcock16g, Adcock17b, Puy17r}, Theorem~\ref{T:strongDuality} precludes the use of this relaxation for both linear and nonlinear dictionaries.

\begin{figure}[tb]
\centering
\includesvg{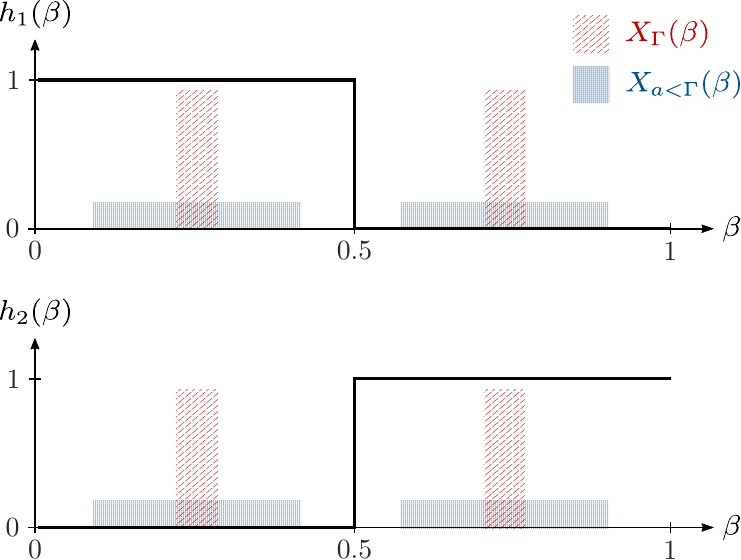}
\caption{Illustration of Example~\ref{R:L0L1}.}
	\label{F:example}
\end{figure}

\begin{example}\label{R:L0L1}

Proposition~\ref{T:L0L1} gives an equivalence between~$L_0$- and~$L_1$-norm minimization problems in terms of their optimal values, but not in terms of their solutions. We illustrate this point using the example depicted in Figure~\ref{F:example}. Let~$\Omega = [0,1]$, $g(\bz) = \norm{\by - \bz}_2^2$, $\by = \vect{cc}{y_1 & y_2}^T$ with~$\abs{y_1},\abs{y_2} < \Gamma/2$, and~$\bF(x,\beta) = \bh(\beta) x$, where~$\bh(\beta) = \vect{cc}{h^\prime(\beta) & 1-h^\prime(\beta)}^T$ with~$h^\prime(\beta) = \indicator(\beta \in [0,1/2])$. Due to the form of~$h^\prime$, it is readily seen that the optimal value of~($\text{P}_1$) is~$P_1^\star = \abs{y_1} + \abs{y_2}$.

Now consider the family of functions indexed by~$0 < a \leq \Gamma$
\begin{equation}
	X_a(\beta) = a \sign(y_1) \indicator(\beta \in \calA_1)
		+ a \sign(y_2) \indicator(\beta \in \calA_2)
		\text{,}
\end{equation}
where~$\calA_1 \subseteq [0,1/2]$ with~$\norm{\calA_1}_{L_0} = \abs{y_1}/a$ and~$\calA_2 \subseteq [1/2,1]$ with~$\norm{\calA_2}_{L_0} = \abs{y_2}/a$~(e.g., $X_\Gamma$ and~$X_{a < \Gamma}$ in Figure~\ref{F:example}). For all~$a$, $X_a$ is a solution of~($\text{P}_1$)~[it is~\eqref{P:L0L1}-feasible with value~$P_1^\star$]. However, its support is given by~$\norm{X_a}_{L_0} = (\abs{y_1}+\abs{y_2})/a$. Thus, ($\text{P}_1$) admits solutions that do not have minimum support~($X_{a < \Gamma}$ in Figure~\ref{F:example}), whereas only~$X_\Gamma$ is a solution of~($\text{P}_0$).

\end{example}

\section{Solving sparse functional programs}
	\label{S:dualSolution}

Theorem~\ref{T:strongDuality} from the previous section establishes duality as a fruitful approach for solving the sparse functional program~\eqref{P:generalSFP}. Indeed, the strong duality of~\eqref{P:generalSFP} implies that
\begin{equation}\label{E:primalFromDual}
	(X^\star, \bz^\star) \in
		\argmin_{\substack{X \in \calX,\, \bz \in \setC^p, \\ X(\bbeta) \in \calP}}
			\calL(X, \bz, \bmu^\star, \bnu^\star)
		\text{,}
\end{equation}
for the Lagrangian~$\calL$ in~\eqref{E:lagrangian}, where~$\bmu^\star$ and~$\bnu^\star$ are the solutions of~\eqref{P:dualSFP}~\cite{Boyd04c}. When this set is a singleton, the inclusion becomes equality and we recover the unique primal solution~$X^\star$. This occurs when the Lagragian~\eqref{E:lagrangian} has a single minimizer, i.e., when~\eqref{P:dualMinimizer} is a singleton. This is the case, for instance, when~$F_0(x,\bbeta) = \abs{x}^2$, in which case~$\calL$ is strongly convex in~$X$~\cite{Boyd04c}. Since Proposition~\ref{T:dualXSolution} allows us to solve~\eqref{E:primalFromDual}, all that remains is to address the issue of solving~\eqref{P:dualSFP} to obtain~$(\bmu^\star,\bnu^\star)$.

Note that~\eqref{P:dualSFP} is a convex program and can therefore be solved using any~(stochastic) convex optimization algorithm~\cite{Boyd04c, Bertsekas15c}. For illustration, this section introduces an algorithm based on supergradient ascent. For ease of reference, a step-by-step guide to solving SFPs is presented in Appendix~\ref{A:solvingSFPs}.

Recall that a supergradient of a concave function~$f: \Omega \to \setR$ at~$\bx \in \Omega$ is any vector~$\bp$ that satisfies the inequality~$f(\by) \leq f(\bx) + \bp^T (\by - \bx)$ for all~$\by \in \Omega$. Although a supergradient may not be an ascent direction at~$\bx$, taking small steps in its direction decreases the distance to any maximizer of~$f$~\cite{Boyd04c}.

It is straightforward to show that the constraint slacks in~\eqref{E:lagrangian} are supergradients of the dual function~$d$ with respect to their corresponding dual variables~\cite{Boyd04c}. Explicitly,
\begin{subequations}\label{E:grad}
\begin{align}
	\bp_{\bmu}(\bmu^\prime,\bnu^\prime) &=
		\int_{\Omega} \bF \left[ X_d(\bmu^\prime, \bbeta), \bbeta \right] d\bbeta
		- \bz_d(\bmu^\prime,\bnu^\prime)
		\label{E:gradMu}
	\\
	\bp_{\nu_i}(\bmu^\prime,\bnu^\prime) &= g_i\left[ \bz_d(\bmu^\prime,\bnu^\prime) \right]
		\label{E:gradNu}
\end{align}
\end{subequations}
are supergradients of~$d$ for the dual minimizers
\begin{subequations}\label{E:dualMinimizers}
\begin{align}
	X_d(\bmu,\cdot) &\in \argmin_{\substack{X \in \calX \\ X(\bbeta) \in \calP}}
		\int_{\Omega} \left\{ F_0 \left[ X(\bbeta),\bbeta \right]
		+ \lambda \indicator\left[ X(\bbeta) \neq 0 \right]
		\vphantom{\sum}\right.
	\notag\\
	&\left.\vphantom{\sum}
	{}+ \Re \left[ \bmu^H \bF \left[ X(\bbeta), \bbeta \right] \right] \right\} d\bbeta
		\text{,}
		\label{E:Xd}
	\\
	\bz_d(\bmu,\bnu) &\in \argmin_{\bz} \sum_i \nu_i g_i(\bz) - \Re[ \bmu^H \bz ]
		\text{.}
		\label{E:zd}
\end{align}
\end{subequations}
Algorithm~\ref{L:dualAscent}, with step size~$\eta_t > 0$, then yields the optimal dual variables~$(\bmu^\star, \bnu^\star)$ and a solution~$X^\star$ of~\eqref{P:generalSFP}.

\begin{algorithm}[t]
\centering
\caption{Dual ascent for SFPs}
	\label{L:dualAscent}
	\setlength{\baselineskip}{1.2\baselineskip}
%	\small
	\begin{algorithmic}
		\State $\bmu^{(0)} = \bzero$, $\nu_i^{(0)} = 1$

		\For{$t = 1,\dots,T$}
		
			\State $\displaystyle
				X_{t-1}(\bbeta) = X_d\left( \bmu^{(t-1)}, \bbeta \right)$
				
			\State $\displaystyle
				\bz_{t-1} = \bz_d\left( \bmu^{(t-1)},\bnu^{(t-1)} \right)$

			\State $\displaystyle
				\bmu^{(t)} = \bmu^{(t-1)} + \eta_t
				\left[ \int_\Omega \bF \left[ X_{t-1}(\bbeta), \bbeta \right] d\bbeta - \bz_{t-1} \right]$

			\State $\displaystyle
				\nu_i^{(t)} = \left[ \nu_i^{(t-1)} + \eta_t g_i\left( \bz_{t-1} \right) \right]_+$
				
			\State $\displaystyle
				P_t = d(\bmu^{(t)}, \bnu^{(t)})$

		\EndFor
		
		\State $X^\star(\bbeta) = X_d\left( \bmu^{(t^\star)}, \bbeta \right)$ for $\displaystyle
			t^\star \in \argmax_{1 \leq t \leq T} P_t$
	\end{algorithmic}
\end{algorithm}

Given that the optimization problem in~\eqref{E:zd} is convex, there are two hurdles in evaluating~\eqref{E:grad}: (i)~obtaining~$X_d$ involves solving the non-convex, infinite dimensional problem in~\eqref{E:Xd} and (ii)~the integral in~\eqref{E:gradMu} may not have an explicit form or this form is too cumbersome to be useful in practice. We have already argued that despite its non-convexity, the minimization in~\eqref{E:Xd} is tractable by exploiting separability~(see Proposition~\ref{T:dualXSolution}). The resulting scalar problem often has a closed-form solution~(see Section~\ref{S:Apps} for examples) or can be tackled using global optimization techniques~\cite{Hendrix10i}. Note that though this approach does not explicitly yield the function~$X_d$, it allows~$X_d(\bmu,\bbeta)$ to be evaluated for any~$\bbeta \in \Omega$ using~\eqref{P:dualMinimizer}. This is enough to numerically compute the integral in~\eqref{E:gradMu}. This integral~[(ii)] may either be approximated numerically or done without by leveraging stochastic optimization techniques. Step-by-step descriptions of both methods are presented in Appendix~\ref{A:solvingSFPs}.

In the first case, we effectively solve a perturbed version of~\eqref{P:generalSFP} and the difference between the optimal value of the original problem and that obtained numerically depends linearly on the precision of the integral computation under mild technical conditions:

\begin{proposition}\label{T:numericalIntegration}

Suppose that
\begin{enumerate}[(i)]

\item the perturbation function of~\eqref{P:generalSFP} is differentiable around the origin;
\item $\int_\Omega F_0(0,\bbeta) d\bbeta = 0$ and $\int_\Omega \bF(0,\bbeta) d\bbeta = \bzero$;
\item there exists~$\alpha > 0$ such that~$g_i(\alpha \ones), g_i(- \alpha \ones) < \infty$; and
\item there exists a strictly feasible pair~$(X^\dagger,\bz^\dagger)$~(Slater's condition) for~\eqref{P:generalSFP} such that~$g_i(\bz^\dagger) < -\epsilon$, for~$\epsilon > 0$, and~$\bar{F}_0 = \int_\Omega F_0(X^\dagger(\bbeta),\bbeta) d\bbeta < \infty$.

\end{enumerate}
If~$P^\star$ is the optimal value of~\eqref{P:generalSFP} and~$P_\delta^\star$ is the value of the solution obtained by Algorithm~\ref{L:dualAscent} when evaluating the integral in~\eqref{E:gradMu} with approximation error~$0 < \delta \ll 1$, then~$\abs{P^\star - P_\delta^\star} \leq \calO(\delta)$.

\end{proposition}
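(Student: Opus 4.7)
The plan is to view Algorithm~\ref{L:dualAscent} with numerical integration as \emph{exact} dual ascent on a perturbed SFP, and then apply a sensitivity/perturbation argument to bound the gap in optimal values by $\mathcal{O}(\delta)$.

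First, I would introduce the perturbation function
\[
v(\bu,\bk) = \inf \left\{ f_0(X) : g_i(\bz) \leq [\bu]_i,\ \int_\Omega \bF[X(\bbeta),\bbeta] d\bbeta - \bz = \bk,\ X(\bbeta) \in \calP \text{ a.e.} \right\},
\]
where $f_0(X) = \int_\Omega F_0[X(\bbeta),\bbeta] d\bbeta + \lambda \|X\|_{L_0}$, so that $v(\bzero,\bzero) = P^\star$. Replacing the integral in the supergradient~\eqref{E:gradMu} by a numerical approximation with error at most $\delta$ means the iterates of Algorithm~\ref{L:dualAscent} coincide exactly with the iterates of dual ascent applied to the problem obtained by shifting the equality constraint by some $\bk_\delta$ with $\|\bk_\delta\| \leq \delta$. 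Hence the value returned by the algorithm equals $P_\delta^\star = v(\bzero,\bk_\delta)$.

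Second, I would exploit strong duality (Theorem~\ref{T:strongDuality}) together with assumption~(i): when $v$ is differentiable at the origin, a classical sensitivity result identifies its gradient with the negative of the optimal dual variables, i.e.\ $\nabla_{\bk} v(\bzero,\bzero) = -\bmu^\star$ and $\nabla_{\bu} v(\bzero,\bzero) = -\bnu^\star$. A first-order expansion then yields
\[
|P^\star - P_\delta^\star| = |v(\bzero,\bk_\delta) - v(\bzero,\bzero)| \leq \|\bmu^\star\|\,\delta + o(\delta),
\]
which is $\mathcal{O}(\delta)$ \emph{provided} $\|\bmu^\star\|$ is finite and independent of~$\delta$.

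The remaining and most delicate step is to show that $(\bmu^\star,\bnu^\star)$ lies in a bounded set. This is where assumptions~(ii)--(iv) enter. Slater's condition~(iv) with slack $\epsilon$ and the feasible primal value $\bar F_0 + \lambda \|X^\dagger\|_{L_0}$ gives the classical bound $\|\bnu^\star\|_1 \leq (\bar F_0 + \lambda \|X^\dagger\|_{L_0} - P^\star)/\epsilon$ on the inequality multiplier. To control the equality multiplier $\bmu^\star$, I would use~(ii) to certify that the trivial pair $(X\equiv 0,\bz=\bzero)$ is admissible in the Lagrangian, and use~(iii) to evaluate $\calL$ at $\bz = \pm\alpha\ones$: the terms linear in $\bmu$ produced in this way dominate the finite contributions $g_i(\pm\alpha\ones)$ whenever $\|\bmu\|$ is large, yielding a quantitative upper bound on $\|\bmu^\star\|$ in terms of $\alpha$ and the already bounded $\|\bnu^\star\|$. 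I expect this last step to be the main obstacle: while the Slater bound on $\bnu^\star$ is standard and the sensitivity identification of $\bmu^\star$ with $-\nabla_{\bk} v$ is routine, extracting an explicit, dimension-free bound on the equality multiplier in the functional setting requires carefully combining (ii) and (iii) to make the Lagrangian coercive in $\bmu$.
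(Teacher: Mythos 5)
Your proposal follows essentially the same route as the paper's proof: reinterpret the inexact supergradient as the exact supergradient of a $\bdelta$-perturbed SFP so that $P_\delta^\star$ is the optimal value of that perturbed problem, apply the first-order sensitivity expansion $P_\delta^\star = P^\star - {\bmu^\star}^T\bdelta + o(\delta^2)$, bound $\norm{\bnu^\star}_1$ via the Slater point (iv), and bound the equality multiplier by evaluating the dual function at the suboptimal pair $X\equiv 0$, $\bz=\pm\alpha\ones$ using (ii)--(iii) together with $d(\bmu^\star,\bnu^\star)=P^\star$ from strong duality. The step you flag as the main obstacle is carried out in the paper exactly as you sketch it, so there is no substantive gap.
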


\begin{proof}

See Appendix~\ref{A:numericalIntegration}.
\end{proof}

In the second case, the integral in~\eqref{E:gradMu} is approximated using Monte Carlo integration, i.e., by drawing a set of~$\bbeta_j$ independently and uniformly at random from~$\Omega$ and taking
\begin{equation}\label{E:mcH}
	\bph_{\bmu} = \frac{1}{N} \sum_{j = 1}^N \bF \left[ X_d(\bmu^\prime, \bbeta_j), \bbeta_j \right]
		- \bz_d(\bmu^\prime,\bnu^\prime)
		\text{.}
\end{equation}
Since Monte Carlo integration is an unbiased estimators, $\bph_{\bmu}$ is an unbiased estimate of~$\bp_{\bmu}$. Taking~$N = 1$ in~\eqref{E:mcH} is akin to performing stochastic (super)gradient ascent on the dual function~$d$. For~$N > 1$, we obtain a mini-batch type algorithm. Typical convergence guarantees hold in both cases~\cite{Ruszczynski86c, Boyd04c, Bottou16o}.

Algorithm~\ref{L:dualAscent}, though effective, may converge slowly depending on the numerical properties of the problem. Faster, problem independent convergence rates can be obtained using, for instance, second-order methods or by exploiting specific structures of SFP instances. Investigating the use and fit of these approaches to solving~\eqref{P:dualSFP} is, however, beyond the scope of this paper.

\section{Applications}
	\label{S:Apps}

So far, we have focused on whether SFPs are tractable. In this section, we illustrate their expressiveness by using~\eqref{P:generalSFP} to cast the problems of nonlinear spectral estimation and robust functional data classification.

\subsection{Nonlinear line spectrum estimation}

The first example application of SFPs is in the context of continuous, possibly nonlinear, sparse dictionary recovery/denoising problems. Formally, let~$\by \in \setC^p$ collect samples~$y_i$, $i = 1,\dots,p$, of a signal. Our goal is to represent~$\by$ using as few atoms as possible from the nonlinear dictionary
\begin{equation}\label{E:dictionary}
	\calD = \left\{ \bF(\cdot, \bbeta): \setC \to \setC^p \mid \bbeta \in \Omega \right\}
		\text{.}
\end{equation}
Explicitly, we wish to find~$\{(x_k,\bbeta_k)\}$ such that
\begin{equation}\label{E:discreteSR}
	\byh = \sum_{k = 1}^K \bF(x_k, \bbeta_k)
\end{equation}
is close to~$\by$ for some small~$K$. Notice that, in contrast to classical dictionary recovery, the relation between the coefficients~$x_k$ and the signal~$\byh$ is not necessarily linear. Moreover, $\Omega$ is an uncountable set, so that we select from a continuum of atoms as opposed to the discrete, finite case.

To make the discussion concrete, consider the problem of estimating the parameters of a small number of saturated sinusoids from samples of their superposition. This problem is found in several signal processing applications, such as telecommunication and direction of arrival~(DOA) estimation, where nonlinear behaviors are common due to hardware limitations of the sources. Formally, we wish to estimate the frequencies, amplitudes, and phases of~$K$ sinusoids from the set of noisy samples
\begin{equation}\label{E:lseModel}
	y_i = \sum_{k = 1}^K \rho \left[ a_k \cos(2 \pi f_k t_i) \right]
		+ n_i
		\text{,} \quad \text{for }  i = 1,\dots,p
		\text{,}
\end{equation}
where~$f_k \in [0,1/2]$ is the frequency and~$a_k \in \setR$ is the amplitude/phase of the~$k$-th component; $t_i$ is the fixed, known sampling time of the $i$-th sample; $\{n_i\}$ are independent and identically distributed~(i.i.d.) zero-mean random variables with variance~$\E n_i^2 = \sigma_n^2$ representing the measurement noise; and~$\rho$ is a function that models the source nonlinearity with~$\rho(0) = 0$.

To pose this estimation problem as an SFP, we need an approximate continuous representation of the signal model in~\eqref{E:lseModel}. We say approximate because the nonlinearity~$\rho$ may prevent us from finding a measurable function~$X$ such that~$\int \rho \left[ X(\varphi) \cos(2 \pi \varphi t_i) \right] d\varphi = \rho\left[ x \cos(2 \pi f t_i) \right]$ for a fixed amplitude-frequency pair~$(x,f)$. Even if~$\rho$ allows it, an exact representation may involve Dirac deltas, which violates a hypothesis of Theorem~\ref{T:strongDuality} and prevents us from efficiently finding a solution of~\eqref{P:generalSFP}. The following proposition introduces a functional signal model that approximates~\eqref{E:lseModel} arbitrarily well using parameters in~$L_2$.

\begin{proposition}\label{T:lseDirect}

For fixed~$a,t \in \setR$, $f \in [0,1/2]$, define the hyperparameter~$B \in \setR_+$ and let
\begin{equation}\label{E:lseRepresentation}
	r(B) = B \int_{0}^{\frac{1}{2}}
		\rho \left[ X^\prime(\varphi) \cos(2 \pi \varphi t) \right] d\varphi
		\text{.}
\end{equation}
If~$X^\prime(\varphi) = a$ for~$\varphi \in [f-B^{-1},f+B^{-1}]$ and zero everywhere else, then~$r(B) \to \rho \left[ a \cos(2 \pi f t) \right]$ as~$B \to \infty$.

\end{proposition}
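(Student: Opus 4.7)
The plan is to exploit the indicator-like structure of $X^\prime$ to reduce~\eqref{E:lseRepresentation} to a normalized integral over a shrinking neighborhood of $f$, and then invoke continuity of $\rho$ and $\cos$ to identify the limit. Since $\rho(0) = 0$ and $X^\prime$ is supported on $[f - B^{-1}, f + B^{-1}]$, the integrand in~\eqref{E:lseRepresentation} vanishes outside this window, so
\begin{equation*}
	r(B) = B \int_{[f - B^{-1},\, f + B^{-1}] \cap [0,1/2]} \rho\bigl[ a \cos(2\pi \varphi t) \bigr] d\varphi.
\end{equation*}
For $B$ sufficiently large that $B^{-1} < \min(f, 1/2 - f)$, the intersection reduces to the full window; the boundary cases $f \in \{0, 1/2\}$ are handled analogously with a one-sided interval and do not change the conclusion.

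Next, I would perform the change of variables $u = B(\varphi - f)$ to rescale the shrinking window into a fixed one, obtaining
\begin{equation*}
	r(B) = \int_{-1}^{1} \rho\bigl[ a \cos(2\pi (f + u/B) t) \bigr] du.
\end{equation*}
Continuity of $\rho$ (implicit in its role as a physically realizable nonlinearity) together with continuity of $\cos$ implies that for every fixed $u$, the integrand converges pointwise to the constant $\rho\bigl[a\cos(2\pi f t)\bigr]$ as $B \to \infty$. Because the integrand is uniformly bounded on the compact argument range $[-|a|, |a|]$ of $\cos$, the bounded convergence theorem justifies exchanging the limit and the integral and yields the claimed limit.

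The approach is essentially a variant of the Lebesgue differentiation property for continuous functions: the normalized integral of a continuous function over a shrinking window converges to its value at the center. Consequently, I do not anticipate any deep analytical difficulty; the only delicate points are ensuring $\rho$ is continuous at $a\cos(2\pi f t)$—which is built into the applications of interest—and properly handling the edge behavior at $f = 0$ or $f = 1/2$ with a truncated window, as noted above. The main bookkeeping obstacle, if any, is matching the normalization constant coming from the window length $2B^{-1}$ with the prefactor $B$ in the definition of $r(B)$.
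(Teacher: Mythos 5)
Your approach is essentially the paper's: recognize $B\,\Pi_{f,B^{-1}}$ (with $\Pi_{f,b}(\varphi)=\indicator(\varphi\in[f-b,f+b])$) as a delta sequence concentrating at $f$ and pass to the limit using continuity of $\rho$ and $\cos$. The paper simply asserts weak convergence of $B\,\Pi_{f,B^{-1}}(\cdot)$ to $\delta(\cdot-f)$; your change of variables plus bounded convergence is the same idea carried out more explicitly, and is arguably the more careful version. However, your own final display exposes a normalization mismatch that you then wave away: after substituting $u=B(\varphi-f)$ you obtain $\int_{-1}^{1}\rho\bigl[a\cos(2\pi(f+u/B)t)\bigr]\,du$, whose limit is $\int_{-1}^{1}\rho\bigl[a\cos(2\pi f t)\bigr]\,du = 2\,\rho\bigl[a\cos(2\pi f t)\bigr]$, i.e., \emph{twice} the claimed value, since the window $[f-B^{-1},f+B^{-1}]$ has length $2/B$ and hence $B\,\Pi_{f,B^{-1}}$ has total mass $2$ (it converges weakly to $2\delta(\cdot-f)$, not $\delta(\cdot-f)$). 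You correctly identified this as "the main bookkeeping obstacle" but then claimed the computation "yields the claimed limit," which it does not. To be fair, the paper's own proof contains the identical factor-of-two slip; the fix is either to take the window $[f-(2B)^{-1},f+(2B)^{-1}]$, to replace the prefactor $B$ by $B/2$, or to state the limit as $2\,\rho[a\cos(2\pi f t)]$. You should make that correction explicit rather than assert a conclusion your displayed integral contradicts. (Your handling of continuity of $\rho$ and of the boundary cases $f\in\{0,1/2\}$ is otherwise fine.)
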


\begin{proof}

Note that~\eqref{E:lseRepresentation} is equivalent to
\begin{equation*}
	r(B) = \int_{0}^{\frac{1}{2}} B \cdot \Pi_{f,B^{-1}}(\varphi)
		\rho \left[ a \cos(2 \pi \varphi t) \right] d\varphi
\end{equation*}
with~$\Pi_{f,b}(\varphi) = \indicator\left( \varphi \in [f-b,f+b] \right)$. The result then follows from the fact that~$B \cdot \Pi_{f,B^{-1}}(\varphi)$ converges weakly to~$\delta(\varphi - f)$ as~$B \to \infty$, where~$\delta$ is the Dirac's delta~\cite{Rudin91f}.
\end{proof}

\begin{figure}[tb]
\centering
\includesvg{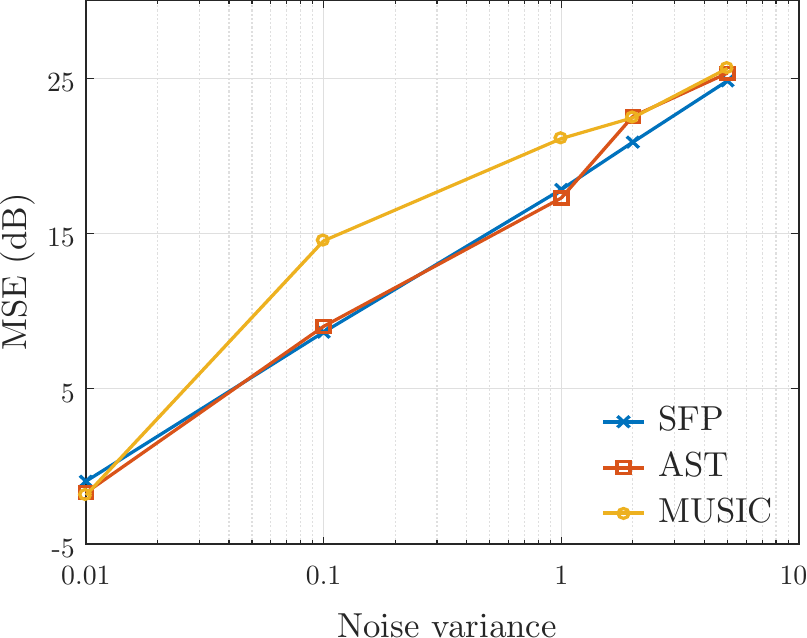}
\caption{Reconstruction MSE for line spectral estimation of linear sources.}
	\label{F:lse_mse}
\end{figure}

Proposition~\ref{T:lseDirect} allows us to cast nonlinear line spectrum estimation as the SFP
\begin{prob}\label{P:lseSFP}
	\minimize_{X \in L_2}& &&\norm{X}_{L_2} + \lambda \norm{X}_{L_0}
	\\
	\subjectto& &&\sum_{i = 1}^{p} (y_i - \hat{y}_i)^2 \leq \epsilon
	\\
	&&&\hat{y}_i = B \int_{0}^{\frac{1}{2}}
		\rho \left[ X(\varphi) \cos(2 \pi \varphi t_i) \right] d\varphi
		\text{,}
	\\
	&&&\qquad \text{for } i = 1,\dots,p
		\text{,}
\end{prob}
where~$B > 0$ is an approximation parameter and~$\epsilon > 0$ determines the solution fit. Problem~\eqref{P:lseSFP} explicitly seeks the sparsest function~$X$ that fits the observations given the model in~\eqref{E:lseModel}. The~$L_2$-norm regularization improves robustness to noise as well as the numerical properties of the dual by adding shrinkage. Note that since~$X \in L_2$, the solution~$X^\star$ of~\eqref{P:lseSFP} does not contain atoms and is instead a superposition of bump functions around the component frequencies~$f_k$~(see, e.g., Figure~\ref{F:nlse_x}). As Proposition~\ref{T:lseDirect} suggests, the height and width of each bump depends on the amplitude of the sinusoidal component and the choice of~$B$. Thus, the parameter~$a_k$ from~\eqref{E:lseModel} can be estimated using
\begin{equation}
	\hat{a}_k = B \int_{\calB_k} X^\star(\varphi) d\varphi
		\text{,}
\end{equation}
where~$X^\star$ is a solution of~\eqref{P:lseSFP} and~$\calB_k \subset [0,1/2]$ contains a single bump. The parameter~$f_k$ can then be estimated using the center frequency of the bump. Naturally, $B$ should be as large as possible so that~\eqref{E:lseRepresentation} is a good approximation of~\eqref{E:lseModel}, improving the parameter estimates. Choosing~$B$ too large, however, degrades the numerical properties of the dual problem, making it harder to solve in practice. Similar trade-offs are found several methods when tuning regularization parameters, for instance, elastic net~\cite{Hastie09e, Kuhn18a}.

Since~$\rho$ is an arbitrary function, a particular case of~\eqref{P:lseSFP} performs spectral estimation with linear sources, i.e., when~$\rho(z) = z$ in~\eqref{E:lseModel} and there is no saturation. The dual function is straightforward to evaluate in this case since the optimization problem~\eqref{E:gammao} from Proposition~\ref{T:dualXSolution} becomes a quadratic program that admits a closed-form solution. However, a myriad of classical methods such as MUSIC or atomic soft thresholding~(AST) have been proposed for the linear case. MUSIC performs line spectrum estimation using the eigendecomposition of the empirical autocorrelation matrix of the measurements~$y_i$~\cite{Stoica05s}. Nevertheless, it can only be used in single snapshot applications when the signal is sampled regularly---see~\cite{Stoica05s} for details---and requires that the number~$K$ of components be known \emph{a priori}. The AST approach, on the other hand, is based on an atomic norm relaxation of the sparse estimation problem and leverages duality and spectral properties of Toeplitz matrices to preclude discretization~\cite{Tang13c, Bhaskar13a}. Both methods first obtain the component frequencies and then determine amplitudes and phases using least squares. These different approaches are compared in Figures~\ref{F:lse_mse} and~\ref{F:lse_supp}.

\begin{figure}[tb]
\centering
\includesvg{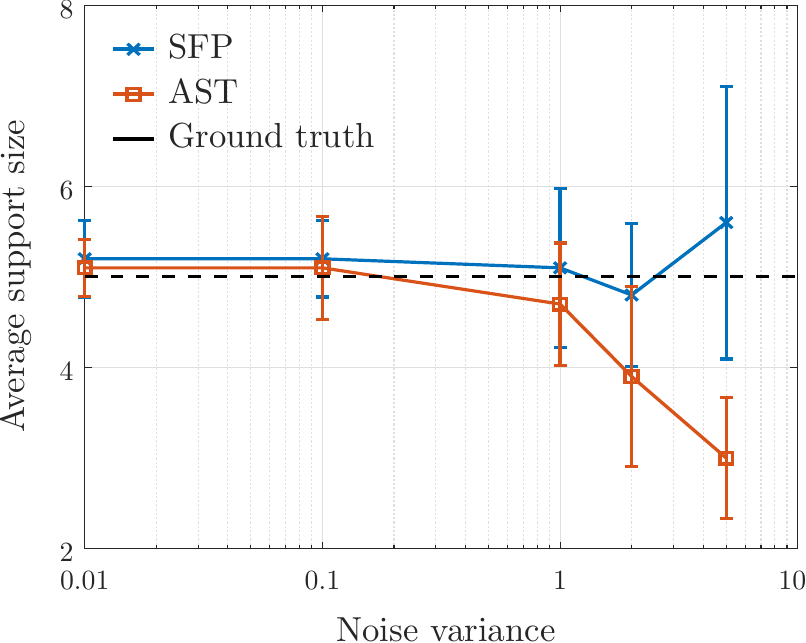}
\caption{Support size estimation for line spectral estimation of linear sources.}
	\label{F:lse_supp}
\end{figure}

These plots display the average performance over~$10$ realizations that used~$p = 61$ samples~($t_i = -30,\dots,30$) of the superposition of~$K = 5$ components whose the frequencies~$f_k$ were drawn uniformly at random with a minimum spacing of~$4/p$ and whose amplitudes~$a_k$ were taken randomly and independently from~$[0.5, 3]$. Problem~\eqref{P:lseSFP} was solved using the approximate supergradient method described in Appendix~\ref{A:solvingSFPs} with~$B = 1$, $\lambda = 5000$ for all noise levels expect~$\sigma_n^2 = 5$ which used~$\lambda = 6000$, and~$\epsilon = p \sigma_n^2$. For the AST method, we used the optimal regularization from~\cite{Bhaskar13a} which depends on~$\sigma_n^2$. In all cases, the reconstruction MSE is evaluated as
\begin{equation*}
	\MSE = \sum_{i = 1}^p (y_i - \hat{y}_i)^2
		\text{,}
\end{equation*}
where~$\hat{y}_i$ denotes the samples reconstructed based on the~$K$ components with largest magnitudes obtained by each algorithm. For AST, the support is obtained from the peaks of the trigonometric polynomial defined by the dual as in~\cite{Tang13c, Bhaskar13a} and for SFP, from the center of the bumps in the solution~$X^\star$ of~\eqref{P:lseSFP}~(as illustrated in Figure~\ref{F:nlse_x}).

In high SNR scenarios, all methods display similar performance. As the level of noise increases, however, the advantages of explicitly minimizing the~$L_0$-norm instead of its convex surrogate become clearer, especially with respect to support identification. Observe in Figure~\ref{F:lse_supp} that as~$\sigma_n^2$ increases the number of components obtained from AST decreases considerably, despite using the optimal regularization parameter. Finally, it is worth noting that although the performances are similar, AST involves solving a semidefinite program~(SDP), which becomes infeasible in practice as the number of samples~$p$ grows and has motivated the study of dimensionality reduction techniques and sampling patterns~\cite{Costa17s}. On the other hand, efficient solvers based on coordinate ascent can be leveraged to solve large-scale SFPs~\cite{Boyd04c, Bertsekas15c}.

\begin{figure}[tb]
\centering
\includesvg{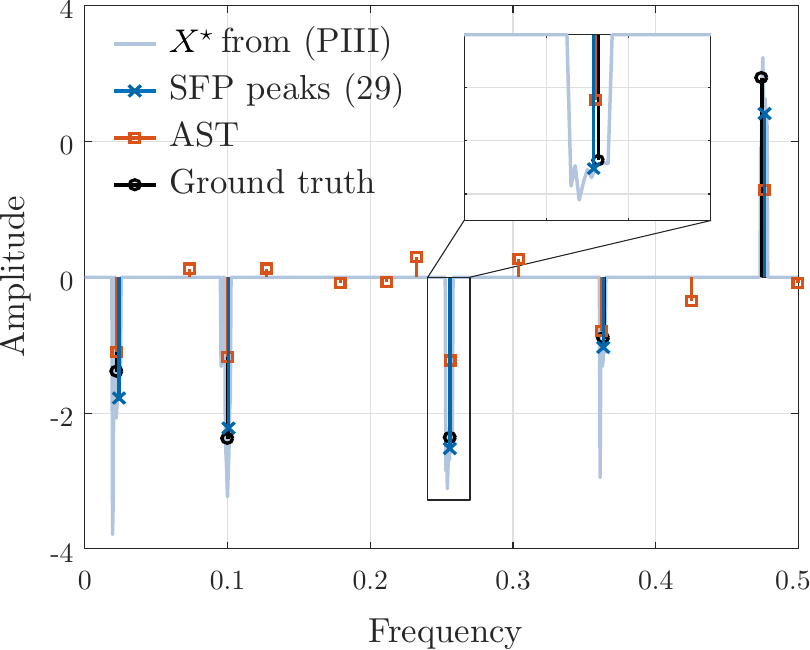}
\caption{Solutions obtained for line spectral estimation of saturated sources.}
	\label{F:nlse_x}
\end{figure}

Still, the signal reconstruction performance is similar across methods in the linear case~(Figure~\ref{F:lse_mse}). This is not surprising given the close relation between~$L_0$- and~$L_1$-norm minimization~(Theorem~\ref{T:L0L1}). In contrast, when the signals are distorted by a nonlinearity, the linear measurement model of AST and MUSIC tends to underestimate the amplitude of the components~(Figure~\ref{F:nlse_x}). Though greedy approaches to atomic norm minimization are able to deal with nonlinear dictionaries, optimally selecting single atoms from these infinite dimensional dictionary is challenging. Exhaustive, grid-based heuristics have been proposed for specific problems without guarantees~\cite{Rao15f}.

To illustrate this effect, consider the hard saturation
\begin{equation}\label{E:rho}
	\rho(x) = 
	\begin{cases}
		x \text{,} & \abs{x} \leq r
		\\
		r \cdot \sign(x) \text{,} & \text{otherwise}
	\end{cases}
		\text{,}
\end{equation}
where~$r > 0$ defines the saturation level. Though computing the dual function may seem challenging in this case due to the nonlinearity, it turns out to be tractable due to the scalar nature of the problem. Indeed, we obtain the dual minimizer from Proposition~\ref{T:dualXSolution} by evaluating
\begin{equation*}
	\gamma^o(\bmu,\varphi) = \min_{x \in \setR}\ x^2
		+ \bmu^T \rho \left[ x \bh(\varphi) \right]
		\text{,}
\end{equation*}
where~$[\bh]_i =  \cos(2 \pi \varphi t_i)$ for~$i = 1,\dots,p$ and the function~$\rho$ applies element-wise. Since we can determine \emph{a priori} which of the elements will saturate, solving this non-convex problem actually reduces to finding the minimum value of~$p$ quadratic problems. Namely, assume that~$\bh$ is sorted such that~$h_1 \leq \dots \leq h_p$ and let~$\bw_i(x) = [h_1 x\ \cdots\ h_i x\ r\ \cdots\ r]^T$, where~$r$ is the saturation level from~\eqref{E:rho}. For conciseness, we omit the dependence on~$\varphi$. Then, $\gamma^o(\bmu) = \min_{1 \leq i \leq p} \gamma^o_i(\bmu)$ for
\begin{equation*}
\begin{aligned}
	\gamma^o_i(\bmu) &= \min_{1/\abs{h_{i+1}} \leq \abs{x} \leq 1/\abs{h_{i}}}
		x^2 + \bmu^T \bw_i(x)
		\text{,} \quad i = 1,\dots,p-1
		\text{,}
	\\
	\gamma^o_p(\bmu) &= \min_{\abs{x} \leq 1/\abs{h_p}}
		x^2 + \bmu^T \bh x
		\text{.}
\end{aligned}
\end{equation*}

Figure~\ref{F:nlse_x} shows the solutions obtained using~\eqref{P:lseSFP} and AST for~$\rho$ as in~\eqref{E:rho} with~$r = 1$. We omit the results for MUSIC in this plot as its performance is similar to AST. Notice that since~\eqref{P:lseSFP} takes the the nonlinear nature of the signal into account it provides more precise parameter estimates. This is evident in Figure~\ref{F:nlse_mse}, which shows that~\eqref{P:lseSFP} leads to lower reconstruction errors, especially in higher SNRs. This is expected since neither AST nor MUSIC take the nonlinear effects into account. Yet, as the noise increases and begins to dominate over mismodeling, the performance of all methods becomes similar. This effect is more pronounced here than in the linear case because the saturation limits the energy of the signal leading to even lower effective SNRs. For instance, the average SNR for~$\sigma_n^2 = 2$ in Figure~\ref{F:lse_mse} is~$6.6$~dB, whereas in Figure~\ref{F:nlse_mse}, it is~$2.05$~dB.

In these experiments, the signal samples were constructed as in the linear case, but we used for~\eqref{P:lseSFP} $B = 200$, $\epsilon = p \sigma_n^2$, and~$\lambda = 100$ for all noise levels expect~$\sigma_n^2 \in \{2,5\}$ which used~$\lambda = 80$. For the AST method, we again used the optimal regularization parameter from~\cite{Bhaskar13a}. Better results could not be obtained by hand-tuning the regularization.

\begin{figure}[tb]
\centering
\includesvg{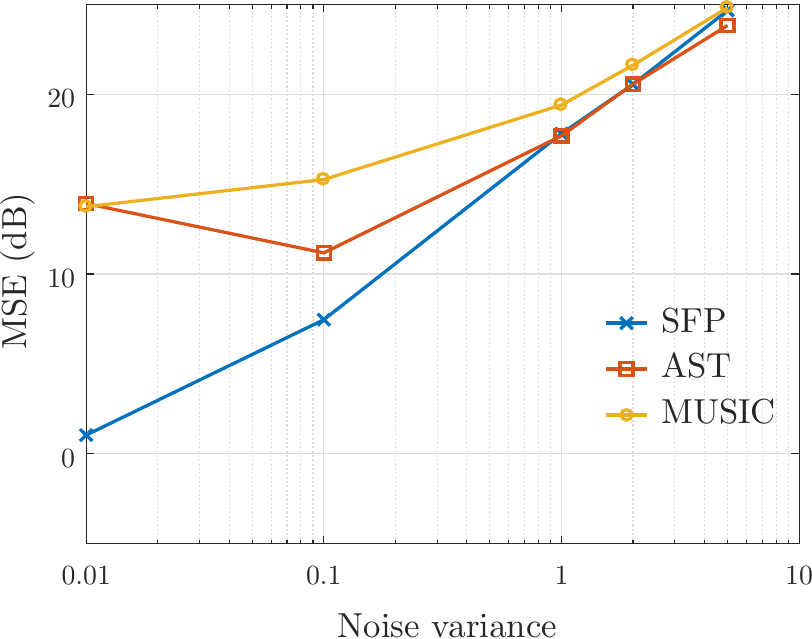}
\caption{Reconstruction MSE for line spectral estimation of saturated sources.}
	\label{F:nlse_mse}
\end{figure}

\subsection{Robust functional data analysis}

Functional data analysis extends classical statistical methods to data supported on continuous domains. Since it copes with non-uniformly sampled data and precludes registration, this tool set is especially appropriated for analyzing time series without assuming generative models, such as AR or ARMAX~\cite{Ramsay05f}. For concreteness, consider the functional extension of logistic regression: given a data pair~$(y_i, Z_i)$ with label~$y_i \in \{0,1\}$ and independent variable~$Z_i: [0,1] \to \setR$, the probability that~$y_i$ is positive is modeled as
\begin{equation}\label{E:FDA}
	\Pr\left[ y_i = 1 \right] =
		\frac{1}{1 + \exp\left( - \int_{0}^1 Z_i(\tau) W(\tau) d\tau + b \right)}
		\text{,}
\end{equation}
where~$W: [0,1] \to \setR$ is the functional classifier parameter and~$b$ is the intercept. Although the domain of~$Z_i$ and~$W$ can be an arbitrary compact set, we use the normalized~$[0,1]$ for simplicity. Typically, some smoothness prior is assumed for~$W$ so that the statistical problem is well-posed, e.g., by using splines or imposing that~$W$ has small RKHS norm~\cite{Ramsay05f}. Observe that if we replace~$\int_{0}^1 Z_i(\tau) W(\tau) d\tau$ by~$\bw^T \bz_i$, for~$\bw,\bz_i \in \setR^m$, we recover the classical, finite dimensional logistic model.

As is the case with traditional~(discrete) logistic regression, the classifier in~\eqref{E:FDA} is sensitive to outliers. In fact, it has been shown recently that any classifier trained by minimizing a convex loss, as is the case of logistic regression or support vector machines~(SVM), suffers from this issue~\cite{Chen13r, Feng14r}. Although sparsity has been used to mitigate this drawback using convex surrogates such as the $\ell_1$-norm~\cite{Plan13r, Tibshirani14r}, these methods remain susceptible to extreme data points caused by impulsive noise or other measurement errors~\cite{Feng14r}.

\begin{figure}[tb]
\centering
\includesvg{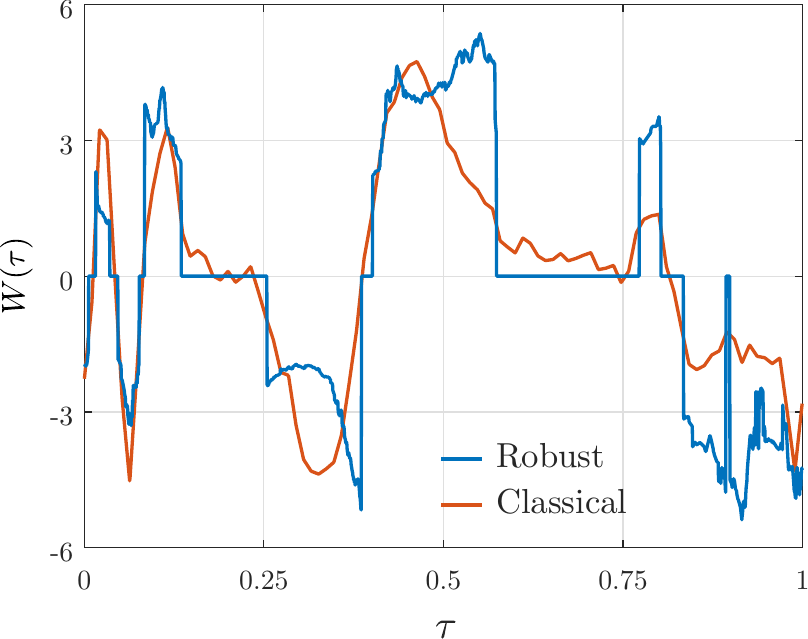}
\caption{Solution of functional logistic regression for ECG classification.}
	\label{F:weights}
\end{figure}

One approach to addressing this weakness is replacing the inner product in~\eqref{E:FDA} by a robust version that reduces the influence of these extreme samples. In~\cite{Chen13r, Feng14r}, this is done by computing inner products over a subset of the data. Here, however, since~\eqref{P:generalSFP} allows us to consider arbitrary nonlinearities in the data model, we can explicitly limit the influence of any sample by saturating the inner product in~\eqref{E:FDA}. Explicitly,
\begin{equation}\label{E:rFDA}
	\Pr\left[ y_i = 1 \right] =
		\frac{1}{1 + \exp\left( - \int_{0}^1 \rho\left[ Z_i(\tau) W(\tau) \right] d\tau + b \right)}
		\text{,}
\end{equation}
where~$\rho$ is the saturation from~\eqref{E:rho}. Notice that~\eqref{E:rFDA} controls the influence of any data point by using the threshold~$r$ from the saturation~\eqref{E:rho}. In fact, notice that due to the saturation, the value of the inner product in~\eqref{E:rFDA} lies in the range~$[-r,r]$. Using the negative log likelihood expression for logistic regression~\cite{Hastie09e}, we then formulate the following SFP for learning the robust classifier
\begin{prob}\label{P:rFDA}
	\minimize_{X \in L_2}& &&\norm{W}_{L_2} + b^2 + \lambda \norm{W}_{L_0}
	\\
	\subjectto& &&-\sum_{i = 1}^p \log\left[ 1 + \exp\left( (1-2y_i) \hat{y}_i \right) \right]
		\leq \epsilon
	\\
	&&&\hat{y}_i = \int_{\calT} \rho \left[ Z_i(\beta) W(\beta) \right] d\beta + b
		\text{,}
	\\
	&&&\text{for } i = 1,\dots,p
		\text{,}
\end{prob}
for some fit parameter~$\epsilon > 0$. Notice that~\eqref{P:rFDA} also allows us to fit sparse functional coefficient~$W$ by setting~$\lambda > 0$. Moreover, although it is written in terms of the logistic likelihood, other convex criteria such as the hinge loss could be used to obtain robust~SVMs.

\begin{figure}[tb]
\centering
\includesvg{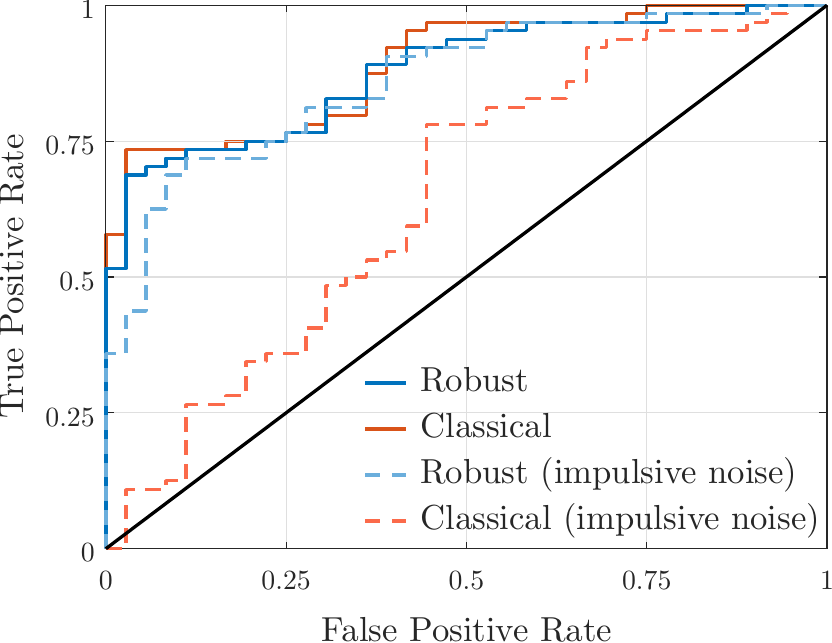}
\caption{Receiver operating characteristic~(ROC) curve for logistic classifiers in the presence of impulsive noise.}
	\label{F:roc}
\end{figure}

To illustrate the performance of the robust classifier~\eqref{E:rFDA}, we consider the problem of identifying whether an electrocardiogram~(ECG) signal comes from a healthy heart or one that suffered a myocardial infarction, i.e., a heart attack. The continuous time series~$Z_i$ are obtained by linearly interpolating a single heartbeat~(see examples in Figures~\ref{F:interp_normal} and~\ref{F:interp_mi}). Other techniques, such as sinc or spline interpolation are also commonly used in functional data analysis~\cite{Ramsay05f}. The labels~$y_i$ indicate whether a heart is healthy~($1$) or not~($0$). The samples used in the following experiments were taken from the ECG200 dataset~\cite{Bagnall17g, ecg200}, which draws from the MIT-BIH Supraventricular Arrhythmia Database~\cite{Goldberger00p}. To train the classical functional logistic classifier in~\eqref{E:FDA}, we solved~\eqref{P:rFDA} with~$\lambda = 0$ and~$r \to \infty$, i.e., no sparsity regularization and no saturation of the inner product. For the robust version, we used~$\lambda = 10$ and~$r = 4$. In both cases, the classifier was fitted with~$\epsilon = -46$ using the approximate supergradient method described in Appendix~\ref{A:solvingSFPs}.

Notice in Figure~\ref{F:weights} that the value of the coefficients of the classical and robust classifiers are similar, leading to comparable performance on both training and test sets~(approximately~$80\%$ accuracy). The receiver operating characteristic~(ROC) curve of both classifiers on the test set is displayed in solid lines in Figure~\ref{F:roc}. The robustness of these classifiers to outliers, on the other hand, is considerably different. To illustrate this behavior, corruption by impulsive noise was simulated by randomly adding~$\pm 20$ to a random subset of~$10\%$ of the samples from each heartbeat in the test set. The resulting ROC curves are shown in dashed lines. Although the performance of the linear logistic classifier has now degraded~(the test accuracy dropped to~$66\%$), the ROC of the robust version remains unaltered due to the nonlinearity~$\rho$ in~\eqref{E:rFDA} limiting the effect of the corruption~(test accuracy of~$76\%$).

Additionally, the sparsity of the robust classifier parameters improves interpretability by focusing on the portions of the signal that differentiate between normal and abnormal heartbeats~(Figures~\ref{F:interp_normal} and~\ref{F:interp_mi}). For instance, healthy heart signals tend to have negative values for~$\tau \in [0.25,0.4]$ and positive values for~$\tau \in [0.4,0.6]$, whereas hearts that suffered myocardial infarctions do not. On the other hand, there is no discriminant information for~$\tau \in [0.6,0.75]$ and, perhaps less intuitively, between~$0.15$ and~$0.25$.

\begin{figure}[tb]
\centering
\includesvg{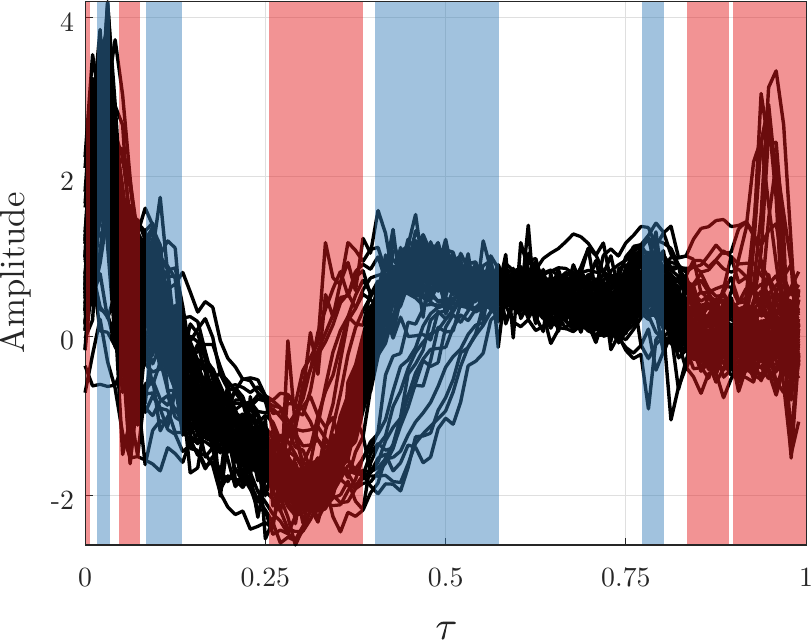}
\caption{ECG of \emph{healthy heart} and sparse functional coefficients~(positive coefficients: blue, negative coefficients: red).}
	\label{F:interp_normal}
\end{figure}

\section{Conclusion}

We proposed to tackle nonlinear, continuous problems involving sparsity penalties directly by solving sparse functional optimization problems. To do so, we showed that a large class of these mathematical programs have no duality gap and can therefore be solved by means of their dual problems. Duality simultaneously bypasses the infinite dimensionality and non-convexity hurdles of the original problem and enables the use of efficient algorithms to solve these non-convex functional programs. Signal processing applications~(nonlinear line spectral estimation and robust functional logistic regression) were used to illustrate the expressiveness of this technique, that we foresee can be used to solve a wide variety of problems in different domains. Future work includes investigating second-order stochastic optimization algorithms to improve the convergence rate of Algorithm~\ref{L:dualAscent} and obtaining identifiability/recovery results for problems such as line spectral estimation. We also believe these strong duality results apply to problems beyond sparsity.

\begin{figure}[tb]
\centering
\includesvg{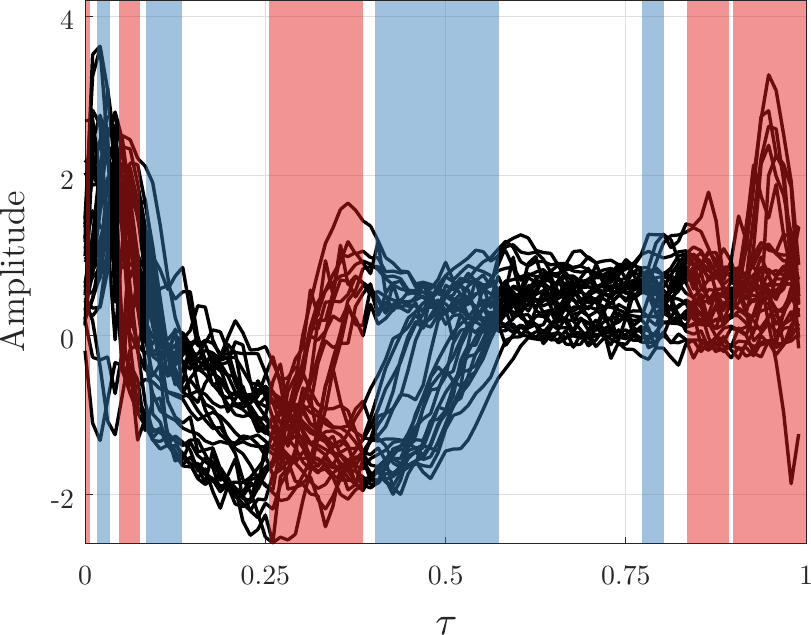}
\caption{ECG of heart with \emph{myocardial infraction} and sparse functional weights~(positive coefficients: blue, negative coefficients: red).}
	\label{F:interp_mi}
\end{figure}

\bibliographystyle{IEEEbib}
\bibliography{IEEEabrv,gsp,sp,math,stat}

\appendices

\section{Proof of Lemma~\ref{T:convexC}}
\label{A:convexC}

Let~$(c,\bu,\bk_R,\bk_I),(c^\prime,\bu^\prime,\bk_R^\prime,\bk_I^\prime)$ be arbitrary points in~$\calC$ achieved for~$(X,\bz),(X^\prime,\bz^\prime) \in \calX \times \setC^p$. In other words, it holds that~$X(\bbeta),X^\prime(\bbeta) \in \calP$~a.e., $f_0(X) \leq c$, $f_0(X^\prime) \leq c^\prime$, $g_i(\bz) \leq [\bu]_i$, $g_i(\bz^\prime) \leq [\bu^\prime]_i$,
\begin{align*}
	\int_{\Omega} \bF \left[ X(\bbeta), \bbeta \right] d\bbeta - \bz
		&= \bk_R + j\bk_I \triangleq \bk
		\text{,}
	\\
	\int_{\Omega} \bF \left[ X^\prime(\bbeta), \bbeta \right] d\bbeta - \bz^\prime
		&= \bk_R^\prime + j\bk_I^\prime \triangleq \bk^\prime
		\text{,}
\end{align*}
where we have defined the shorthands~$\bk$ and~$\bk^\prime$ for conciseness. To show that~$\calC$ is convex, it suffices to show that~$\theta (c,\bu,\bk_R,\bk_I) + (1-\theta) (c^\prime,\bu^\prime,\bk_R^\prime,\bk_I^\prime) \in \calC$ for any~$\theta \in [0,1]$. Equivalently, we must obtain~$(X_\theta,\bz_\theta) \in \calX \times \setC^p$ such that~$X_\theta(\bbeta) \in \calP$ a.e.,
\begin{subequations}\label{E:convexC}
\begin{align}
	f_0(X_\theta) &\leq \theta c + (1-\theta) c^\prime
		\label{E:convexC1}
		\text{,}
	\\
	g_i(\bz_\theta) &\leq [\theta \bu + (1-\theta) \bu^\prime]_i
		\label{E:convexC2}
		\text{,}
	\\
	\int_{\Omega} \bF \left[ X_\theta(\bbeta), \bbeta \right] d\bbeta - \bz_\theta &=
		\theta \bk + (1-\theta) \bk^\prime	
		\label{E:convexC3}
		\text{,}
\end{align}
\end{subequations}
for any~$0 \leq \theta \leq 1$. To do so, we will rely on the following classical theorem about the range of non-atomic vector measures:

\begin{theorem}[Lyapunov's convexity theorem~\cite{Diestel77v}]
\label{T:lyapunov}

Let~$\fkv: \calB \to \setC^n$ be a vector measure over the measurable space~$(\Omega,\calB)$. If~$\fkv$ is non-atomic, then its range is convex, i.e., the set~$\{ \fkv(\calA) : \calA \in \calB \}$ is a convex set.

\end{theorem}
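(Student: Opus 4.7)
The plan is to handle non-emptiness as a direct corollary of the Slater-type hypothesis of Theorem~\ref{T:strongDuality} (the strictly feasible pair $(X^\prime,\bz^\prime)$ produces a point in $\calC$ by choosing any $c \geq f_0(X^\prime)$, $[\bu]_i \geq g_i(\bz^\prime)$, and $\bk_R = \bk_I = \bzero$), and to devote the real work to convexity. Given two points $(c,\bu,\bk_R,\bk_I)$ and $(c^\prime,\bu^\prime,\bk_R^\prime,\bk_I^\prime)$ in $\calC$ realized by $(X,\bz),(X^\prime,\bz^\prime) \in \calX \times \setC^p$, and any $\theta \in [0,1]$, I must exhibit a pair $(X_\theta,\bz_\theta)$ that realizes the convex combination $\theta(c,\bu,\bk_R,\bk_I) + (1-\theta)(c^\prime,\bu^\prime,\bk_R^\prime,\bk_I^\prime)$. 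The $\bz$-component is easy: set $\bz_\theta = \theta\bz + (1-\theta)\bz^\prime$ so that convexity of $g_i$ gives $g_i(\bz_\theta) \leq \theta[\bu]_i + (1-\theta)[\bu^\prime]_i$. The difficulty concentrates on the $X$-side, because the cost includes the non-convex $L_0$-norm and $\bF$ may be nonlinear, so a naive linear combination $\theta X + (1-\theta) X^\prime$ does not interpolate the required integrals.

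The key idea is to replace convex combination by measurable splicing. For a set $\calA \in \calB$, define
\begin{equation*}
    X_\calA(\bbeta) = X(\bbeta)\indicator(\bbeta \in \calA) + X^\prime(\bbeta)\indicator(\bbeta \notin \calA).
\end{equation*}
Decomposability of $\calX$ ensures $X_\calA \in \calX$, and $X_\calA(\bbeta) \in \calP$ a.e.\ is inherited pointwise from $X$ and $X^\prime$. For $X_\calA$ to realize the convex combination, I need $\calA$ to balance four scalar/vector averages simultaneously: $\int_\Omega F_0[X_\calA,\bbeta] d\bbeta$, $\|X_\calA\|_{L_0}$, $\Re \int_\Omega \bF[X_\calA,\bbeta] d\bbeta$, and $\Im \int_\Omega \bF[X_\calA,\bbeta] d\bbeta$. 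These translate, after rearrangement, into $2p+2$ scalar conditions of the form $\int_\calA (\cdot) d\bbeta = \theta \int_\Omega (\cdot) d\bbeta$, one for each differential integrand $F_0[X,\bbeta] - F_0[X^\prime,\bbeta]$, $\indicator[X\neq 0] - \indicator[X^\prime\neq 0]$, and the real/imaginary parts of $\bF[X,\bbeta] - \bF[X^\prime,\bbeta]$.

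This is where Lyapunov's convexity theorem enters. I introduce the $\setR^{2p+2}$-valued vector measure $\fkv$ on $(\Omega,\calB)$ whose components are the indefinite integrals of the four differential integrands listed above. Each component is absolutely continuous with respect to Lebesgue measure $\fkm$, and the hypothesis of Theorem~\ref{T:strongDuality} that $F_0$ and $\bF$ contain no Dirac deltas precisely rules out atoms in these densities, so $\fkv$ is non-atomic. Theorem~\ref{T:lyapunov} then guarantees that the range of $\fkv$ is convex. Because $\fkv(\emptyset) = \bzero$ and $\fkv(\Omega)$ equals the vector of total differences between the two configurations, convexity of the range produces a measurable $\calA$ with $\fkv(\calA) = \theta\,\fkv(\Omega)$, which is exactly the system of $2p+2$ identities I need.

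With this $\calA$ chosen, a direct computation gives $\int_\Omega F_0[X_\calA,\bbeta]d\bbeta = \theta\int_\Omega F_0[X,\bbeta]d\bbeta + (1-\theta)\int_\Omega F_0[X^\prime,\bbeta]d\bbeta$, $\|X_\calA\|_{L_0} = \theta\|X\|_{L_0} + (1-\theta)\|X^\prime\|_{L_0}$, and the analogous complex-valued identity for $\bF$. Summing the first two yields $f_0(X_\calA) = \theta f_0(X) + (1-\theta) f_0(X^\prime) \leq \theta c + (1-\theta) c^\prime$; combined with the $\bF$-identity and $\bz_\theta = \theta\bz + (1-\theta)\bz^\prime$, the equality constraint reproduces $\theta\bk + (1-\theta)\bk^\prime$; and the inequalities are handled by convexity of $g_i$ as noted above. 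Hence $\theta(c,\bu,\bk_R,\bk_I) + (1-\theta)(c^\prime,\bu^\prime,\bk_R^\prime,\bk_I^\prime) \in \calC$, proving convexity. The main obstacle is the non-atomicity verification for $\fkv$: without the no-Dirac-mass hypothesis a singular component in $F_0$ or $\bF$ could place atoms into the vector measure and collapse its range, invalidating the whole splicing construction.
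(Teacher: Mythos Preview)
Your proposal does not prove the stated theorem. Theorem~\ref{T:lyapunov} is Lyapunov's convexity theorem itself---a classical result about the range of a non-atomic vector measure---and the paper cites it from~\cite{Diestel77v} without proof. What you have written is instead a proof of Lemma~\ref{T:convexC} (convexity and non-emptiness of the cost--constraints set~$\calC$), in which Lyapunov's theorem is \emph{invoked} as a tool. You even say so explicitly: ``This is where Lyapunov's convexity theorem enters.'' So as a proof of Theorem~\ref{T:lyapunov}, the proposal is circular: it assumes the result it is supposed to establish.

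If the intended target was actually Lemma~\ref{T:convexC}, then your argument is correct and follows essentially the same route as the paper's Appendix~\ref{A:convexC}: splice $X_\calA$ from $X$ on a measurable set $\calA$ and from $X^\prime$ on its complement, set $\bz_\theta = \theta\bz + (1-\theta)\bz^\prime$, and use Lyapunov's theorem to choose~$\calA$ so that the relevant integrals interpolate. The only technical difference is in how the vector measure is packaged. The paper builds a $2(p+1)$-dimensional measure~$\fkp$ whose components are the integrals of $\bF[X,\cdot]$, $\bF[X^\prime,\cdot]$, $F_0[X,\cdot]+\lambda\indicator[X\neq 0]$, and $F_0[X^\prime,\cdot]+\lambda\indicator[X^\prime\neq 0]$ separately, and then uses $\fkp(\calT_\theta)=\theta\fkp(\Omega)$ together with $\fkp(\Omega\setminus\calT_\theta)=(1-\theta)\fkp(\Omega)$. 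You instead assemble a $(2p+2)$-dimensional measure~$\fkv$ from the \emph{differences} $F_0[X,\cdot]-F_0[X^\prime,\cdot]$, $\indicator[X\neq 0]-\indicator[X^\prime\neq 0]$, and the real and imaginary parts of $\bF[X,\cdot]-\bF[X^\prime,\cdot]$. Both constructions are equivalent after the obvious algebra and lead to the same conclusion; neither buys anything the other does not.
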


To see how Theorem~\ref{T:lyapunov} allows us to construct the desired~$X_\theta$, start by defining a~$2(p+1) \times 1$ vector measure~$\fkp$ over~$(\Omega,\calB)$ such that for every set~$\calZ \in \calB$ we have
\begin{equation}\label{E:vectorMeas}
	\renewcommand*{\arraystretch}{1.5}
	\fkp(\calZ) =
	\begin{bmatrix}
		\int_{\calZ} \bF \left[ X(\bbeta), \bbeta \right] d\bbeta
		\\
		\int_{\calZ} \bF \left[ X^\prime(\bbeta), \bbeta \right] d\bbeta
		\\
		\int_{\calZ} \left[ F_0 \left( X(\bbeta),\bbeta \right)
			+ \lambda \indicator\left( X(\bbeta) \neq 0 \right) \right] d\bbeta
		\\
		\int_{\calZ} \left[ F_0 \left( X^\prime(\bbeta),\bbeta \right)
			+ \lambda \indicator\left( X^\prime(\bbeta) \neq 0 \right) \right] d\bbeta
	\end{bmatrix}
		\text{.}
\end{equation}
Notice that~$\fkp$ is a proper vector measure, so that~$\fkp(\emptyset) = \bzero$. Also, observe that evaluating~$\fkp$ on the whole space~$\Omega$ yields
\begin{equation}\label{E:vectorMeasOmega}
	\renewcommand*{\arraystretch}{1.5}
	\fkp(\Omega) =
	\begin{bmatrix}
		\int_{\Omega} \bF \left[ X(\bbeta), \bbeta \right] d\bbeta
		\\
		\int_{\Omega} \bF \left[ X^\prime(\bbeta), \bbeta \right] d\bbeta
		\\
		f_0(X)
		\\
		f_0(X^\prime)
	\end{bmatrix}
	\Rightarrow
	\fkp(\Omega)
	=
	\begin{bmatrix}
		\bk + \bz
		\\
		\bk^\prime + \bz^\prime
		\\
		f_0(X)
		\\
		f_0(X^\prime)
	\end{bmatrix}
		\text{.}
\end{equation}
Finally, observe that since~$F_0$ and~$\bF$ do not contain Dirac deltas, they induce non-atomic measures. Consequently, $\fkp$ is non-atomic.

To proceed, use Theorem~\ref{T:lyapunov} to find a set~$\calT_\theta \in \calB$ such that
\begin{equation}\label{E:calT}
	\fkp(\calT_\theta)
		= \theta \fkp(\Omega) + (1-\theta) \fkp(\emptyset)
		= \theta \fkp(\Omega)
\end{equation}
for~$\theta \in [0,1]$. Since~$\calB$ is a $\sigma$-algebra, it holds that~$\Omega \setminus \calT_\theta \in \calB$ and by the additivity of measures we get
\begin{equation}\label{E:calTc}
	\fkp(\Omega \setminus \calT_\theta) =
		(1 - \theta) \fkp(\Omega)
		\text{.}
\end{equation}
From~\eqref{E:calT} and~\eqref{E:calTc}, construct~$X_\theta$ as
\begin{equation}\label{E:xTheta}
	X_\theta(\bbeta) =
	\begin{cases}
		X(\bbeta) \text{,}
			&\text{for } \bbeta \in \calT_\theta
		\\
		X^\prime(\bbeta) \text{,}
			&\text{for } \bbeta \in \Omega \setminus \calT_\theta
	\end{cases}
\end{equation}
and let~$\bz_\theta = \theta \bz + (1-\theta) \bz^\prime$. We claim that this pair satisfies~\eqref{E:convexC}. It is straightforward from the fact that~$\calX$ is decomposable and that~$\calP$ is a pointwise constraint, that~$X_\theta \in \calX$ and $X_\theta(\bbeta) \in \calP$~a.e.

Let us start by showing that~$X_\theta$ satisfies~\eqref{E:convexC1}. Evaluating~$f_0$ at~$X_\theta$ yields
\begin{align*}
	f_0(X_\theta) &= \int_{\Omega} \left[ F_0 \left( X_\theta(\bbeta),\bbeta \right)
		+ \lambda \indicator\left( X_\theta(\bbeta) \neq 0 \right) \right] d\bbeta
	\\
	{}&= \int_{\calT_\theta} \left[ F_0 \left( X(\bbeta),\bbeta \right)
			+ \lambda \indicator\left( X(\bbeta) \neq 0 \right) \right] d\bbeta
	\\
	{}&+ \int_{\Omega \setminus \calT_\theta} \left[ F_0 \left( X^\prime(\bbeta),\bbeta \right)
			+ \lambda \indicator\left( X^\prime(\bbeta) \neq 0 \right) \right] d\bbeta
		\text{.}
\end{align*}
From~\eqref{E:vectorMeas}, we can write these terms using the last two rows of the vector measure~$\fkp$ as
\begin{equation}\label{E:machinery}
	f_0(X_\theta) = \left[ \mathfrak{p}(\calT_\theta) \right]_{2p+1} +
		\left[ \mathfrak{p}(\Omega \setminus \calT_\theta) \right]_{2p+2}
		\text{.}
\end{equation}
Then, using~\eqref{E:calT} and~\eqref{E:calTc} we obtain that
\begin{align*}
	f_0(X_\theta) &= \left[ \theta \mathfrak{p}(\Omega) \right]_{2p+1} +
		\left[ (1 - \theta) \mathfrak{p}(\Omega) \right]_{2p+2}
	\\
	{}&= \theta f_0(X) + (1 - \theta) f_0(X^\prime) \leq \theta c + (1 - \theta) c^\prime
		\text{.}
\end{align*}

To proceed, notice that since the~$g_i$ are convex functions, \eqref{E:convexC2} obtains immediately. Explicitly,
\begin{align*}
	g_i(\bz_\theta) &= g_i(\theta \bz + (1-\theta) \bz^\prime)
		\leq \theta g_i(\bz) + (1-\theta) g_i(\bz^\prime)
	\\
	{}&\leq [\theta \bu + (1-\theta) \bu^\prime]_i
		\text{.}
\end{align*}
Finally, we can use the same machinery as in~\eqref{E:machinery} to obtain~\eqref{E:convexC3}. Indeed,
\begin{align*}
	\int_{\Omega} \bF \left[ X_\theta(\bbeta), \bbeta \right] d\bbeta
		&= \int_{\calT_\theta} \bF \left[ X(\bbeta), \bbeta \right] d\bbeta
	\\
	{}&+ \int_{\Omega \setminus \calT_\theta}
		\bF \left[ X^\prime(\bbeta), \bbeta \right] d\bbeta
	\\
	{}&= \left[ \fkp(\calT_\theta) \right]_{\calS_1}
		+ \left[ \fkp(\Omega \setminus \calT_\theta) \right]_{\calS_2}
	\\
	{}&= \left[ \theta \fkp(\Omega) \right]_{\calS_1}
		+ \left[ (1-\theta) \fkp(\Omega) \right]_{\calS_2}
	\\
	{}&= \theta \bk + (1 - \theta) \bk^\prime + \bz_\theta
		\text{,}
\end{align*}
where~$\calS_1 = \{1,\dots,p\}$ and~$\calS_2 = \{p+1,\dots,2p\}$ select rows~$1$ through~$p$ and~$p+1$ through~$2p$, respectively, of the vector measure~$\fkp$. 

To conclude, since there exists a pair~$(X_\theta,\bz_\theta) \in \calX \times \setC^p$ with~$X_\theta(\bbeta) \in \calP$ a.e.\ and such that~\eqref{E:convexC} holds for any~$\theta \in [0,1]$ and~$(c,\bu,\bk_R,\bk_I),(c^\prime,\bu^\prime,\bk_R^\prime,\bk_I^\prime) \in \calC$, the set~$\calC$ is convex. Moreover, the strictly feasible pair~$(X^\prime,\bz^\prime)$ from the hypotheses implies that~$\calC$ cannot be empty.

\section{A step-by-step guide to solving SFPs}
\label{A:solvingSFPs}

Start with a problem of the form~\eqref{P:generalSFP}. Initialize~$\bmu_0$ and~$\nu_{i,0} > 0$; compute~$d_{\bz,0} = \min_{\bz} \sum_i \nu_{i,0} g_i(\bz) - \Re[ \bmu_{0}^H \bz ]$ and let~$\bz_{0}$ be its minimizer; evaluate
\begin{equation}\label{E:gamma_o_0}
	\gamma^o_{0}(\bbeta) = \min_{x \in \calP} F_0(x,\bbeta)
		+ \Re \left[ \bmu_{0}^H \bF(x, \bbeta) \right]
\end{equation}
and let~$\bar{X}_{0}(\bbeta)$ be its minimizer; define the initial solution support to be~$\calS_{0} = \{\bbeta \in \Omega : \gamma^o_{0}(\bbeta) < \gamma^{(0)}(\bmu_{0},\bbeta) - \lambda\}$ for~$\gamma^{(0)}$ defined as in Proposition~\ref{T:dualXSolution}; obtain the primal solution
\begin{equation*}
	X_{0}(\bbeta) = \begin{cases}
		\bar{X}_{0}(\bbeta) \text{,} & \bbeta \in \calS_{0}
		\\
		0 \text{,} & \text{otherwise}
	\end{cases}
\end{equation*}
and evaluate the initial dual objective using
\begin{align*}
	d_{0} &= d_{\bz,0} + I \left[ \left( \lambda + \gamma^o_{0}(\bbeta) \right) \times
		\indicator \left( \bbeta \in \calS_0 \right) \vphantom{\sum}\right]
	\\
	{}&+ I \left[ \gamma^{(0)}(\bmu_{0},\bbeta) \times
		\indicator \left( \bbeta \in \Omega \setminus \calS_0 \right) \vphantom{\sum}\right]
		\text{,}
\end{align*}
where~$I$ denotes a numerical integration method. Then, proceed using one of the following solvers.

\vspace*{\baselineskip}
\noindent\textbf{Approximate supergradient ascent}.
Consider a numerical integration procedure represented by~$I(\cdot)$ such that
\begin{equation}\label{E:integralApproxError}
	\abs{I( f ) - \int_{\Omega} f(\bbeta) d\bbeta} \leq \delta
		\text{,}
\end{equation}
for~$\delta > 0$ and assume that~$I$ applies element-wise to vectors. Let~$X^\star_{0} = X_0$ and for~$t = 1,\dots,T$:

\begin{enumerate}[i)]

\item compute the supergradients
\begin{align*}
	\bp_{\bmu,t-1} &=
		I \left[ \bF \left( X_{t-1}(\bbeta), \bbeta \right) \vphantom{\sum}\right]
		- \bz_{t-1}
	\\
	\bp_{\nu_i,t-1} &= g_i\left[ \bz_{t-1} \right]
		\text{,}
\end{align*}

\item update the dual variables
\begin{align*}
	\bmu_t &= \bmu_{t-1} + \eta_t \bp_{\bmu,t-1}
	\\
	\nu_{i,t} &= \left[ \nu_{i,t-1} + \eta_t g_i\left( \bz_{t-1} \right) \right]_+
		\text{,}
\end{align*}

\item evaluate~$d_{\bz,t} = \min_{\bz} \sum_i \nu_{i,t} g_i(\bz) - \Re[ \bmu_{t}^H \bz ]$ and let~$\bz_{t}$ be its minimizer,

\item evaluate
\begin{equation*}
	\gamma^o_{t}(\bbeta) = \min_{x \in \calP} F_0(x,\bbeta)
		+ \Re \left[ \bmu_{t}^H \bF(x, \bbeta) \right]
		\text{,}
\end{equation*}
and let~$\bar{X}_{t}(\bbeta)$ be its minimizers,

\item evaluate the dual function
\begin{align*}
	d_{t} &= d_{\bz,t} + I \left[ \left( \lambda + \gamma^o_{t}(\bbeta) \right) \times
		\indicator \left( \bbeta \in \calS_t \right) \vphantom{\sum}\right]
	\\
	{}&+ I \left[ \gamma^{(0)}(\bmu_{t},\bbeta) \times
		\indicator \left( \bbeta \in \Omega \setminus \calS_t \right) \vphantom{\sum}\right]
		\text{,}
\end{align*}
for~$\calS_{t} = \{\bbeta \in \Omega : \gamma^o_{t}(\bbeta) < \gamma^{(0)}(\bmu_{t},\bbeta) - \lambda\}$, and

\item if~$d_{t} > d_{t-1} + 2 \delta$, obtain the primal solution
\begin{equation*}
	X_{t}(\bbeta) = \begin{cases}
		\bar{X}_{t}(\bbeta) \text{,} & \bbeta \in \calS_{t}
		\\
		0 \text{,} & \text{otherwise}
	\end{cases}
\end{equation*}
and let~$X^\star_t = X_t$. Otherwise, $X^\star_t = X^\star_{t-1}$.

\end{enumerate}

The solution of~\eqref{P:generalSFP} is given by~$X^\star_T$.

\vspace*{\baselineskip}
\noindent\textbf{Stochastic supergradient ascent}.
Choose the mini-batch size~$N \geq 1$ and initialize the solution set~$\calX_0 = \emptyset$. For~$t = 1,\dots,T$:

\begin{enumerate}[i)]

\item draw~$\{\bbeta_j\}$, $j = 1,\dots,N$, uniformly at random from~$\Omega$ and compute the stochastic supergradients
\begin{align*}
	\bp_{\bmu,t-1} &=
		\frac{1}{N} \sum_{j = 1}^N \bF \left[ X_{t-1}(\bbeta_j), \bbeta \right] d\bbeta
		- \bz_{t-1}
	\\
	\bp_{\nu_i,t-1} &= g_i\left[ \bz_{t-1} \right]
		\text{,}
\end{align*}

\item update the dual variables
\begin{align*}
	\bmu_t &= \bmu_{t-1} + \eta_t \bp_{\bmu,t-1}
	\\
	\nu_{i,t} &= \left[ \nu_{i,t-1} + \eta_t g_i\left( \bz_{t-1} \right) \right]_+
		\text{,}
\end{align*}

\item evaluate~$d_{\bz,t} = \min_{\bz} \sum_i \nu_{i,t} g_i(\bz) - \Re[ \bmu_{t}^H \bz ]$ and let~$\bz_{t}$ be its minimizer,

\item evaluate
\begin{equation*}
	\gamma^o_{t}(\bbeta) = \min_{x \in \calP} F_0(x,\bbeta)
		+ \Re \left[ \bmu_{t}^H \bF(x, \bbeta) \right]
\end{equation*}
and let~$\bar{X}_{t}(\bbeta)$ be its minimizer,

\item evaluate the dual function
\begin{equation*}
	d_{t} = d_{\bz,t} + \int_{\calS_{t}} \left[ \lambda + \gamma^o_{t}(\bbeta) \right] d\bbeta
		+ \int_{\Omega \setminus \calS_{t}} \gamma^{(0)}(\bmu_{t},\bbeta) d\bbeta
		\text{,}
\end{equation*}
for~$\calS_{t} = \{\bbeta \in \Omega : \gamma^o_{t}(\bbeta) < \gamma^{(0)}(\bmu_{t},\bbeta) - \lambda\}$, and

\item if~$d_{t} > d_{t-1}$, obtain the primal solution
\begin{equation*}
	X_{t-1}(\bbeta) = \begin{cases}
		\bar{X}_{t-1}(\bbeta) \text{,} & \bbeta \in \calS_{t-1}
		\\
		0 \text{,} & \text{otherwise}
	\end{cases}
\end{equation*}
and let~$\calX_t = \calX_{t-1} \cup X_t$. Otherwise, $\calX_t = \calX_{t-1}$.

\end{enumerate}

The final solution is obtained by averaging the elements of~$\calX_T$, i.e.,
\begin{equation*}
	X^\star(\bbeta) = \frac{1}{\abs{\calX_T}} \sum_{X \in \calX_T} X(\bbeta)
		\text{.}
\end{equation*}

\section{Proof of Proposition~\ref{T:numericalIntegration}}
\label{A:numericalIntegration}

We actually prove the following quantitative version of Proposition~\ref{T:numericalIntegration}:

\begin{proposition}

Under the conditions of Proposition~\ref{T:numericalIntegration}, $\abs{P^\star - P_\delta^\star} \leq c \delta + o(\delta^2)$ for
\begin{equation}\label{E:errorConstant}
	c = \frac{\bar{F}_0
			+ \lambda \fkm(\Omega)}{\alpha \epsilon}
	\max\left(
		\abs{\sum_i g_i(-\alpha \ones)},
		\abs{\sum_i g_i(\alpha \ones)}
	\right)
		\text{,}
\end{equation}
where~$P^\star$ is the optimal value of~\eqref{P:generalSFP} and~$P_\delta^\star$ is the value of the solution obtained by Algorithm~\ref{L:dualAscent} when evaluating the integral in the supergradient~\eqref{E:gradMu} with approximation error~$0 < \delta \ll 1$~[as in~\eqref{E:integralApproxError}].

\end{proposition}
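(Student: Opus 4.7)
The plan is to view the numerical integration error in Algorithm~\ref{L:dualAscent} as a perturbation of the equality constraint of~\eqref{P:generalSFP} and then invoke a sensitivity argument that leverages Theorem~\ref{T:strongDuality}. When each component of the supergradient~\eqref{E:gradMu} is computed within accuracy~$\delta$, Algorithm~\ref{L:dualAscent} is effectively performing exact dual ascent on a perturbed instance of~\eqref{P:generalSFP} whose equality constraint reads~$\int_\Omega \bF[X(\bbeta),\bbeta]\,d\bbeta - \bz = \bk$ for some~$\bk$ with~$\norm{\bk}_\infty\leq\delta$. Letting~$p(\bk)$ denote the optimal value of this perturbed program, we have $P^\star = p(\bzero)$ and $P_\delta^\star = p(\bk_\delta)$ for some such $\bk_\delta$. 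Since Theorem~\ref{T:strongDuality} provides strong duality and~$p$ is differentiable at the origin by assumption~(i), the classical sensitivity identity gives $\nabla p(\bzero) = -\bmu^\star$, and a first-order Taylor expansion yields $\abs{P_\delta^\star - P^\star} \leq \abs{\Re[\bmu^{\star H}\bk_\delta]} + o(\delta^2)$. The task therefore reduces to bounding this linear functional of the optimal dual variables by the problem data.

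To bound~$\bnu^\star$, I plug the strict Slater point~$(X^\dagger,\bz^\dagger)$ from~(iv) into the Lagrangian at~$(\bmu^\star,\bnu^\star)$. Because~$\bz^\dagger = \int_\Omega \bF[X^\dagger(\bbeta),\bbeta]\,d\bbeta$, the equality-constraint term vanishes and $P^\star = d(\bmu^\star,\bnu^\star) \leq f_0(X^\dagger) + \sum_i \nu_i^\star g_i(\bz^\dagger) \leq \bar{F}_0 + \lambda\fkm(\Omega) - \epsilon\norm{\bnu^\star}_1$, giving $\norm{\bnu^\star}_1 \leq (\bar{F}_0 + \lambda\fkm(\Omega))/\epsilon$ (the~$P^\star$ term drops out because condition~(ii), applied by choosing $X\equiv 0$ in~\eqref{E:dX}, gives $d_X(\bmu^\star)\leq 0$ and hence $P^\star \geq 0$ via the analogous argument at $\bz=\bzero$). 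For~$\bmu^\star$, I will use the $\pm\alpha\ones$ trick enabled by condition~(iii): substituting $\bz = \pm\alpha\ones$ into~\eqref{E:zd} gives $d_\bz(\bmu^\star,\bnu^\star) \leq \sum_i \nu_i^\star g_i(\pm\alpha\ones) \mp \alpha\Re[\bmu^{\star H}\ones]$, while the lower bound $d_\bz(\bmu^\star,\bnu^\star) \geq P^\star \geq 0$ from the previous step yields $\alpha\abs{\Re[\bmu^{\star H}\ones]} \leq \max\{\abs{\sum_i \nu_i^\star g_i(\alpha\ones)},\abs{\sum_i \nu_i^\star g_i(-\alpha\ones)}\}$. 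Handling the imaginary part via $\bz = \pm j\alpha\ones$ and chaining with the bound on~$\norm{\bnu^\star}_1$ then produces the product structure of~\eqref{E:errorConstant}.

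The main obstacle is bridging the gap between the scalar bound on $\abs{\Re[\bmu^{\star H}\ones]}$ from the~$\pm\alpha\ones$ trick and the pairing $\abs{\Re[\bmu^{\star H}\bk_\delta]}$ with the arbitrary perturbation direction~$\bk_\delta$ required by the sensitivity step. The bridge relies on two observations: the worst-case error direction can be taken to be~$\delta\cdot\sign(\bmu^\star_R+j\bmu^\star_I)$, aligning~$\bk_\delta$ coordinate-wise with~$\bmu^\star$; and the $\pm\alpha\ones$ argument can be adapted to this signed version by running it with the appropriate sign pattern in each coordinate. A secondary technical issue is sharpening the Taylor remainder from the generic~$o(\delta)$ to the stated~$o(\delta^2)$, which demands a second-order expansion of~$p$ and hence regularity beyond mere differentiability at the origin; this is implicit in the combination of assumption~(i) with strong duality and convexity of the dual program.
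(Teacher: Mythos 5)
Your plan follows the paper's proof essentially step for step: reinterpret the inexact supergradient as the exact supergradient of a perturbed SFP whose equality constraint is shifted by an error vector $\bdelta$ with $\abs{[\bdelta]_i}\leq\delta$, Taylor-expand the perturbation function using hypothesis (i) so that the error is controlled by a linear functional of $\bmu^\star$, bound $\norm{\bnu^\star}_1$ by plugging the Slater point of hypothesis (iv) into the dual function, and bound the $\bmu^\star$ term by substituting $X\equiv 0$ and $\bz=\pm\alpha\ones$ under hypotheses (ii)--(iii). The skeleton is correct and identical to the paper's.

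The one place your plan does not close is the step you yourself flag as the ``main obstacle,'' and your proposed bridge would not work under the stated hypotheses: aligning the perturbation direction coordinate-wise with $\sign(\bmu^\star)$ and rerunning the $\pm\alpha$ substitution with that sign pattern requires $g_i(\alpha\bs)<\infty$ for an arbitrary sign vector $\bs$, whereas hypothesis (iii) only guarantees finiteness at $\pm\alpha\ones$. For what it is worth, the paper elides this point rather than resolving it: it passes from $\abs{{\bmu^\star}^T\bdelta}$ to $\abs{{\bmu^\star}^T\ones}\,\delta$, which is only valid when the entries of $\bmu^\star$ share a sign (H\"older gives the safe bound $\norm{\bmu^\star}_1\delta$, and it is $\norm{\bmu^\star}_1$, not $\abs{{\bmu^\star}^T\ones}$, that the $\pm\alpha\ones$ trick would then need to control). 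Similarly, your concern about the remainder order is legitimate but moot for comparison purposes: the paper simply asserts the expansion $P_\delta^\star = P^\star - {\bmu^\star}^T\bdelta + o(\norm{\bdelta}_2^2)$ directly from differentiability of the perturbation function, without the second-order regularity you correctly note would be needed to justify an $o(\delta^2)$ rather than $o(\delta)$ remainder. In short: same approach, correct identification of the delicate steps, but neither your signed-direction fix nor a sharper remainder argument is actually carried out, so as written the proposal stops short of a complete proof at exactly the points where the paper is itself terse.
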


\begin{proof}

Start by noticing that evaluating the integral in~\eqref{E:gradMu} numerically introduces an error term in the supergradient. Explicitly, \eqref{E:gradMu} becomes
\begin{equation}\label{E:gtilde}
	\tilde{g}_{\bmu}(\bmu^\prime,\nu_i^\prime) =
		\int_{\Omega} \bF \left[ X_d(\bmu^\prime, \bbeta), \bbeta \right] d\bbeta
		- \bz_d(\bmu^\prime,\nu_i^\prime)
		+ \bdelta
		\text{,}
\end{equation}
where~$\bdelta$ is an error vector whose magnitude is bounded by $\delta$, i.e., $\abs{[\bdelta]_i} < \delta$. Then, observe that~\eqref{E:gtilde} is the supergradient of the dual function of a perturbed version of~\eqref{P:generalSFP}, namely
\begin{prob}\label{P:numericalIntegral}
	\minimize&
		&&\int_{\Omega} F_0 \left[ X(\bbeta),\bbeta \right] d\bbeta
			+ \lambda \norm{X}_{L_0}
	\\
	\subjectto& &&g_i(\bz) \leq 0
	\\
	&&&\bz = \int_{\Omega} \bF \left[ X(\bbeta), \bbeta \right] d\bbeta + \bdelta
	\\
	&&&X \in \calX
		\text{.}
\end{prob}
Hence, the value~$P_\delta^\star$ of the solution obtained by the using approximate supergradient in Algorithm~\ref{L:dualAscent} is the optimal value of~\eqref{P:numericalIntegral}. We can therefore use perturbation theory to relate the values of~$P_\delta^\star$ and~$P^\star$.

Formally, using the fact that the perturbation function of~\eqref{P:generalSFP} is differentiable around zero~[hypothesis~(i)], we obtain the Taylor expansion~$P_\delta^\star = P^\star - {\bmu^\star}^T \bdelta + o(\norm{\bdelta}_2^2)$, where~$o(t)$ is a term such that~$o(t)/t \to 0$ as~$t \to 0$~\cite{Boyd04c}. Hence, using the triangle inequality and the upper bound on the elements of~$\abs{\bdelta}$, we can write
\begin{equation}\label{E:perturbationBound}
	\abs{P^\star - P_\delta^\star} = \abs{{\bmu^\star}^T \bdelta + o(\norm{\bdelta}_2^2)}
		\leq \abs{{\bmu^\star}^T \ones} \delta + o(\delta^2)
		\text{.}
\end{equation}
It suffices now to bound~$\abs{{\bmu^\star}^T \ones}$, which we do in two steps.

First, we obtain an upper bound on~${\bmu^\star}^T \ones$ by recalling from~\eqref{E:preDualFunction1} that the dual function~$d$ is the value of a minimization problem. Thus, taking the suboptimal~$X \equiv 0$ and~$\bz = \alpha \ones$, $\alpha > 0$, under hypothesis~(ii) yields
\begin{equation*}
	d(\bmu^\star,\nu_i^\star) \leq \sum_i \nu_i^\star g_i(\alpha \ones) - \alpha {\bmu^\star}^T \ones
		\text{.}
\end{equation*}
From Theorem~\ref{T:strongDuality}, $d(\bmu^\star,\nu_i^\star) = P^\star \geq 0$, which gives
\begin{equation}\label{E:upperBound}
	{\bmu^\star}^T \ones \leq \frac{\sum_i \nu_i^\star g_i(\alpha \ones)}{\alpha}
		\text{.}
\end{equation}
Proceeding in a similar manner, we derive a lower bound by taking~$X \equiv 0$ and~$\bz = -\alpha \ones$ in~\eqref{E:preDualFunction1}, leading to
\begin{equation}\label{E:lowerBound}
	{\bmu^\star}^T \ones \geq -\frac{\sum_i \nu_i^\star g_i(-\alpha \ones)}{\alpha}
		\text{.}
\end{equation}
Using the Cauchy-Schwartz inequality, the bounds in~\eqref{E:upperBound} and~\eqref{E:lowerBound} yield
\begin{equation}\label{E:jointBound}
	\abs{{\bmu^\star}^T \ones} \leq \frac{\norm{\bm{\nu}^\star}_1}{\alpha}
	\max\left(
		\abs{\sum_i g_i(-\alpha \ones)},
		\abs{\sum_i g_i(\alpha \ones)}
	\right)
		\text{,}
\end{equation}
where~$\bm{\nu}^\star = \left[ \nu_i^\star \right]$ is a vector that collects the optimal dual variables~$\nu_i^\star$. Note that since~$\nu_i^\star \geq 0$, we have that~$\abs{\sum_i \nu_i^\star} = \norm{\bm{\nu}^\star}_1$. All that remains to evaluate~\eqref{E:jointBound} is to bound~$\norm{\bm{\nu}^\star}_1$ using a classical result from optimization theory.

Explicitly, consider the strictly feasible pair~$(X^\dagger,\bz^\dagger)$ from hypothesis~(iv) and recall that~$g_i(\bz^\dagger) \leq -\epsilon$ for some~$\epsilon > 0$. Plugging these suboptimal values in~\eqref{E:preDualFunction1} yields
\begin{equation}\label{E:preNuBound}
	d(\bmu^\star,\nu_i^\star) \leq \int_{\Omega} F_0 \left[ X^\dagger(\bbeta),\bbeta \right] d\bbeta
		+ \lambda \norm{X^\dagger}_{L_0}
	- \sum_i \nu_i^\star \epsilon
		\text{.}
\end{equation}
Recall that~$\nu_i^\star \geq 0$, $\epsilon > 0$, and~$d(\bmu^\star,\nu_i^\star) \geq 0$~(from~Theorem~\ref{T:strongDuality}). Thus, using the fact that~$\norm{X^\dagger}_{L_0} \leq \fkm(\Omega)$, we readily obtain from~\eqref{E:preNuBound} that
\begin{equation}\label{E:nuBound}
	\norm{\bm{\nu}^\star}_1 \leq
	\frac{
		\int_{\Omega} F_0 \left[ X^\dagger(\bbeta),\bbeta \right] d\bbeta
			+ \lambda \fkm(\Omega)
	}{
		\epsilon
	}
		\text{.}
\end{equation}
Combining~\eqref{E:jointBound} and~\eqref{E:nuBound} in~\eqref{E:perturbationBound} we obtain that~$\abs{P^\star - P_\delta^\star} \leq c \delta + o(\delta^2)$ for~$c$ as in~\eqref{E:errorConstant}. Furthermore, hypotheses~(iii) and~(iv), together with~$\fkm(\Omega) < \infty$~(since~$\Omega$ is compact), imply that~$c < \infty$, so that indeed~$\abs{P^\star - P_\delta^\star} \leq \calO(\delta)$.
\end{proof}

% ========== BIOGRAPHIES ==========
\begin{IEEEbiography}[{\includegraphics[width=1in,height=1.25in]{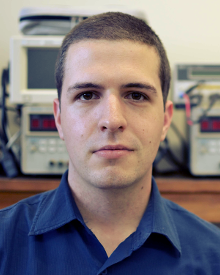}}]{Luiz F. O. Chamon (S'12)}
received the B.Sc. and M.Sc. degree in electrical engineering from the University of S\~{a}o Paulo, S\~{a}o Paulo, Brazil, in 2011 and 2015. In 2009, he was an undergraduate exchange student at the Masters in Acoustics of the \'{E}cole Centrale de Lyon, Lyon, France. He is currently working toward the Ph.D. degree in electrical and systems engineering at the University of Pennsylvania (Penn), Philadelphia. In 2009, he was an Assistant Instructor and Consultant on nondestructive testing at INSACAST Formation Continue. From 2010 to 2014, he worked as a Signal Processing and Statistical Consultant on a project with EMBRAER. His research interest include signal processing, optimization, statistics, and control.
\end{IEEEbiography}

\begin{IEEEbiography}[{\includegraphics[width=1in,height=1.25in]{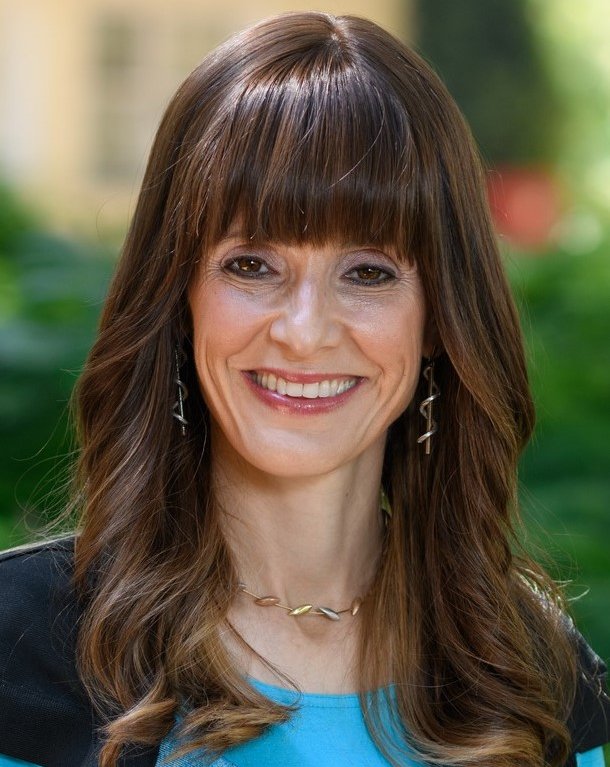}}]{Yonina C. Eldar (S’98–M’02–SM’07-F’12)}
received the B.Sc.\ degree in Physics in~1995 and the B.Sc.\ degree in Electrical Engineering in~1996 both from Tel-Aviv University~(TAU), Tel-Aviv, Israel, and the Ph.D.\ degree in Electrical Engineering and Computer Science in~2002 from the Massachusetts Institute of Technology~(MIT), Cambridge. She is currently a Professor in the Department of Mathematics and Computer Science, Weizmann Institute of Science, Rehovot, Israel. She was previously a Professor in the Department of Electrical Engineering at the Technion, where she held the Edwards Chair in Engineering. She is also a Visiting Professor at MIT, a Visiting Scientist at the Broad Institute, and an Adjunct Professor at Duke University and was a Visiting Professor at Stanford. She is a member of the Israel Academy of Sciences and Humanities~(elected~2017), an IEEE Fellow and a EURASIP Fellow. Her research interests are in the broad areas of statistical signal processing, sampling theory and compressed sensing, learning and optimization methods, and their applications to biology and optics.

Dr.\ Eldar has received numerous awards for excellence in research and teaching, including the IEEE Signal Processing Society Technical Achievement Award~(2013), the IEEE/AESS Fred Nathanson Memorial Radar Award~(2014), and the IEEE Kiyo Tomiyasu Award~(2016). She was a Horev Fellow of the Leaders in Science and Technology program at the Technion and an Alon Fellow. She received the Michael Bruno Memorial Award from the Rothschild Foundation, the Weizmann Prize for Exact Sciences, the Wolf Foundation Krill Prize for Excellence in Scientific Research, the Henry Taub Prize for Excellence in Research~(twice), the Hershel Rich Innovation Award~(three times), the Award for Women with Distinguished Contributions, the Andre and Bella Meyer Lectureship, the Career Development Chair at the Technion, the Muriel \& David Jacknow Award for Excellence in Teaching, and the Technion’s Award for Excellence in Teaching~(twice). She received several best paper awards and best demo awards together with her research students and colleagues including the SIAM outstanding Paper Prize and the IET Circuits, Devices and Systems Premium Award, and was selected as one of the 50 most influential women in Israel.

She was a member of the Young Israel Academy of Science and Humanities and the Israel Committee for Higher Education. She is the Editor in Chief of Foundations and Trends in Signal Processing, a member of the IEEE Sensor Array and Multichannel Technical Committee and serves on several other IEEE committees. In the past, she was a Signal Processing Society Distinguished Lecturer, member of the IEEE Signal Processing Theory and Methods and Bio Imaging Signal Processing technical committees, and served as an associate editor for the IEEE Transactions on Signal Processing, the EURASIP Journal of Signal Processing, the SIAM Journal on Matrix Analysis and Applications, and the SIAM Journal on Imaging Sciences. She was Co-Chair and Technical Co-Chair of several international conferences and workshops.
\end{IEEEbiography}

\begin{IEEEbiography}[{\includegraphics[width=1in,height=1.25in]{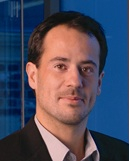}}]{Alejandro Ribeiro}
received the B.Sc. degree in electrical engineering from the Universidad de la Republica Oriental del Uruguay, Montevideo, in 1998 and the M.Sc. and Ph.D. degree in electrical engineering from the Department of Electrical and Computer Engineering, the University of Minnesota, Minneapolis in 2005 and 2007.

From 1998 to 2003, he was a member of the technical staff at Bellsouth Montevideo. After his M.Sc. and Ph.D studies, in 2008 he joined the University of Pennsylvania (Penn), Philadelphia, where he is currently the Rosenbluth Associate Professor at the Department of Electrical and Systems Engineering. His research interests are in the applications of statistical signal processing to the study of networks and networked phenomena. His focus is on structured representations of networked data structures, graph signal processing, network optimization, robot teams, and networked control.

Dr. Ribeiro received the 2014 O. Hugo Schuck best paper award, and paper awards at CDC 2017, 2016 SSP Workshop, 2016 SAM Workshop, 2015 Asilomar SSC Conference, ACC 2013, ICASSP 2006, and ICASSP 2005. His teaching has been recognized with the 2017 Lindback award for distinguished teaching and the 2012 S. Reid Warren, Jr. Award presented by Penn's undergraduate student body for outstanding teaching. Dr. Ribeiro is a Fulbright scholar class of 2003 and a Penn Fellow class of 2015.
\end{IEEEbiography}

\end{document}